\documentclass{article}


\usepackage[nonatbib,final]{neurips_2021}





\usepackage{hyperref}
\hypersetup{
    colorlinks,
    linkcolor={red!50!black},
    citecolor={green!50!black},
    urlcolor={blue!80!black}
}
\usepackage{amsfonts}
\usepackage{amssymb}
\usepackage{amsmath}
\usepackage{amsthm}
\usepackage{bm}
\usepackage{breakcites}
\usepackage{centernot}
\usepackage[capitalize]{cleveref}
\usepackage{dsfont}
\usepackage[inline, shortlabels]{enumitem}
\usepackage[T1]{fontenc}
\usepackage[utf8]{inputenc}
\usepackage{mathtools}
\usepackage{microtype}     
\usepackage{parskip}
\usepackage{scalerel}
\usepackage{setspace}
\usepackage{upquote}
\usepackage{verbatim}

\usepackage[color=yellow, textsize=tiny]{todonotes}
\usepackage{comment}

\usepackage[backend=bibtex,style=numeric,backref=true,natbib=true]{biblatex}
\DefineBibliographyStrings{english}{%
  backrefpage = {page},
  backrefpages = {pages},
}
\addbibresource{references.bib}

\usepackage{chngcntr}
\usepackage{apptools}
\AtAppendix{\counterwithin{theorem}{section}}

\theoremstyle{definition}
\newtheorem{theorem}{Theorem}

\newtheorem{lemma}[theorem]{Lemma}

\newtheorem{corollary}[theorem]{Corollary}
\newtheorem{proposition}[theorem]{Proposition}

\newtheorem*{theorem*}{Theorem}
\newtheorem*{proposition*}{Proposition}
\newtheorem*{definition*}{Definition}

\newtheorem*{question*}{Question}

\newtheorem*{claim*}{Claim}
\newtheorem*{assumption*}{assumption}

\newcommand\argmin{\operatorname*{argmin}}

\DeclarePairedDelimiterX{\twobarparen}[2]{(}{)}{#1\;\delimsize\|\;#2}
\DeclarePairedDelimiterX{\inner}[2]{\langle}{\rangle}{#1, #2}
\DeclarePairedDelimiterX{\binner}[2]{\lparen}{\rparen}{#1 \vert{} #2}

\let\emptyset\varnothing

\newcommand{\dd}{\mathop{}\!\mathrm{d}}
\newcommand{\abs}[1]{\left\vert{} #1 \right\vert{}}


\newcommand{\G}{\mathcal{G}}

\let\P\relax
\DeclareMathOperator{\supp}{\text{supp}}

\DeclareMathOperator{\P}{\mathbb{P}}
\DeclareMathOperator{\E}{\mathbb{E}}

\DeclareMathOperator{\Unif}{\text{Unif}}

\newcommand{\iid}{i.i.d.}



\renewcommand{\H}{\mathcal{H}}

\newcommand{\R}{\mathbb{R}}

\newcommand{\X}{\mathcal{X}}

\newcommand{\norm}[1]{\lVert#1\rVert}
\newcommand{\lnorm}[1]{\left\lVert#1\right\rVert}
\newcommand{\fnorm}[1]{\norm{#1}_\text{F}}

\DeclareMathOperator{\tr}{Tr} 



\DeclareUnicodeCharacter{2212}{$-$}

\renewcommand{\O}{\mathcal{O}}

\newcommand{\xx}{\bm{X}}
\newcommand{\yy}{\bm{Y}}
\newcommand{\xxi}{\bm{\xi}}

\newcommand{\Lmu}{{L_2(\X, \mu)}}

\newcommand{\pJ}{J^\perp}

\newcommand{\te}{\tilde{e}}

\newcommand{\lmin}{\lambda_\text{min}}
\newcommand{\lmax}{\lambda_\text{max}}
\DeclareMathOperator{\id}{\text{id}}
\newcommand{\dimeff}{\dim_{\text{eff}}}

\let\origperp\perp
\renewcommand{\perp}{{\scaleobj{0.75}{\origperp}}} 

\newcommand{\Hs}{\overline{\H}}
\newcommand{\Ha}{\H_\perp}

\colorlet{notegreen}{green!50!black}

\newenvironment{supplementary}{}{}
\def\submission{0}

\title{Provably Strict Generalisation Benefit for Invariance in Kernel Methods}

\author{%
    Bryn Elesedy\\
    University of Oxford\\
    \texttt{bryn@robots.ox.ac.uk}\\
}

\begin{document}

\if\submission1
    \excludecomment{supplementary}
\fi

\maketitle

\begin{abstract}
    It is a commonly held belief that enforcing invariance 
    improves generalisation.
    Although this approach enjoys widespread popularity,
    it is only very recently that a rigorous theoretical demonstration of 
    this benefit has been established.
    In this work we build on the function space perspective 
    of~\citet{elesedy2021provably} to derive a strictly non-zero
    generalisation benefit of incorporating invariance in 
    kernel ridge regression when the target is invariant to 
    the action of a compact group.
    We study invariance enforced by
    feature averaging
    and find that
    generalisation is governed by a notion of
    effective dimension that arises from the interplay between
    the kernel and the group.
    In building towards this result, we find that 
    the action of the group induces an orthogonal decomposition of 
    both the reproducing kernel Hilbert space and its kernel,
    which may be of interest in its own right.
\end{abstract}

\section{Introduction}
Recently, there has been significant interest in models
that are invariant to the action of a group on their inputs.
It is believed that engineering models in this way improves 
sample efficiency and generalisation.
Intuitively, if a task has an invariance,
then a model that is constructed to be invariant ahead of time should require
fewer examples to generalise than one that must learn to be invariant.
Indeed, there are many application domains, such as fundamental physics
or medical imaging, in which the invariance is known a priori~\cite{spencer2020better,winkels20183d}.
Although this intuition is certainly not new (e.g.~\cite{wood1996representation}),
it has inspired much recent work (for instance, see~\cite{zaheer2017deep,lee2019set}).

However, while implementations and practical applications abound, until
very recently a rigorous theoretical justification for invariance was missing.
As pointed out in~\cite{elesedy2021provably}, many prior works
such as~\cite{sokolic2017generalization,sannai2019improved} provide only
worst-case guarantees on the performance of invariant algorithms.
It follows that these results do not rule out the possibility of
modern training algorithms automatically favouring invariant models, 
irrespective of the choice of architecture.
Steps towards a more concrete theory of the benefit of invariance
have been taken by~\cite{elesedy2021provably,mei2021learning}
and our work is a continuation along the path set by~\cite{elesedy2021provably}.

In this work we provide a precise characterisation of the generalisation
benefit of invariance in kernel ridge regression.
In contrast to~\cite{sokolic2017generalization,sannai2019improved},
this proves a \emph{provably strict} generalisation benefit
for invariant, feature-averaged models.
In deriving this result, we provide insights into the
structure of reproducing kernel Hilbert spaces in relation to 
invariant functions that we believe will be useful for
analysing invariance in other kernel algorithms.

The use of feature averaging to produce invariant predictors
enjoys both theoretical and practical success~\cite{lyle2020benefits,foster2020improving}.
For the purposes of this work, feature averaging is defined as 
training a model as normal (according to any algorithm) and then
transforming the learned model to be invariant.
This transformation is done by \emph{orbit-averaging},
which means projecting the model on the space of invariant
functions using the operator $\O$ introduced in~\cref{sec:function-space}.

Kernel methods have a long been a mainstay of machine learning
(see~\cite[Section 4.7]{steinwart2008support} for a brief historical overview).
Kernels can be viewed as mapping the input data into a potentially
infinite dimensional feature space,
which allows for analytically tractable inference with non-linear predictors.
While modern machine learning practice is dominated by neural networks,
kernels remain at the core of much of modern theory.
The most notable instance of this is 
the theory surrounding the \emph{neural tangent kernel}~\cite{jacot2018neural},
which states that the functions realised by an infinitely wide neural network
belong to a reproducing kernel Hilbert space (RKHS) with a kernel determined by the network architecture.
This relation has led to many results on the theory of optimisation
and generalisation of wide neural networks (e.g.~\cite{lee2019wide,arora2019fine}).
In the same vein, via the NTK, we believe the results of this paper 
can be extended to study wide, invariant neural networks.

\subsection{Summary of Contributions}
This paper builds towards a precise characterisation of
the benefit of incorporating invariance in kernel ridge regression by
feature averaging.

\cref{thm:structure}, given in~\cref{sec:structure}, forms the basis of our work, 
showing that the action of the 
group $\G$ on the input space induces an orthogonal decomposition 
of the RKHS $\H$ as 
\[
    \H = \Hs \oplus \Ha
\]
where each term is an RKHS and
$\Hs$ consists of all of the invariant functions in $\H$.
We stress that, while the main results of this paper concern kernel ridge regression,
\cref{thm:structure} holds regardless of training
algorithm and could be used to explore invariance in other kernel methods.

Our main results are given in~\cref{sec:generalisation} and we outline them here.
We define the generalisation gap $\Delta(f, f')$ for two predictors $f, f'$ as the
difference in their test errors. 
If $\Delta(f, f') > 0$ then $f'$ has \emph{strictly better test performance} than
$f$.
\cref{thm:tta} describes $\Delta(f, f')$ for $f$ being the solution to 
kernel ridge regression and $f'$ its invariant (feature averaged) version
and shows that it is positive when the target is invariant.

More specifically, let $X \sim \mu$ where $\mu$ is $\G$-invariant and
$Y = f^*(X) + \xi$ with $f^*$ $\G$-invariant and $\E[\xi] =0$,
$\E[\xi^2] = \sigma^2 < \infty$. Let $f$ be the solution to kernel ridge
regression with kernel $k$ and regularisation parameter $\rho > 0$ on $n$ \iid~training
examples $\{(X_i, Y_i) \sim (X, Y):i=1, \dots, n\}$
and let $f'$ be its feature averaged version.
Our main result, \cref{thm:tta}, says that
\[
    \E[\Delta(f, f')]
    \ge
    \frac{\sigma^2 \dimeff(\Ha) + \mathcal{E}}{(\sqrt{n}M_k + \rho/\sqrt{n})^2}
\]
where $M_k = \sup_x k(x, x) <\infty$, $\mathcal{E} \ge 0$ describes the approximation errors
and $\dimeff(\Ha)$ is the effective dimension of the RKHS $\Ha$.
For an RKHS $\H$ with kernel $k$ the effective dimension is defined by
\[
    \dimeff(\H) = \int_\X k(x, y)^2 \dd\mu(x) \dd\mu(y).
\]
where $\X = \supp \mu$. We return to this quantity at various points in the paper.

It is important to note that the use of the feature averaged predictor $f'$ as a
comparator is without loss of generality. Any other predictor $f''$ 
that has test risk not larger than $f'$ would satisfy the above bound, simply
because this means $\Delta(f', f'') \ge 0$ so 
$\Delta(f, f'') = \Delta(f, f') + \Delta(f', f'') \ge \Delta(f, f')$.\footnote{%
To be completely clear: if, for instance, it so happens that projecting the RKHS onto a space
of invariant predictors before doing KRR gives lower test risk than projecting afterwards
(what we are calling feature averaging), then our result applies in that case too.}

Finally, for intuition, in~\cref{thm:linear}
we specialise~\cref{thm:tta} to the linear setting and compute the bound exactly.
Assumptions and technical conditions are given in~\cref{sec:preliminaries}
along with an outline of the ideas of~\citet{elesedy2021provably} on which we build.
Related works are discussed in~\cref{sec:related}.

\section{Background and Preliminaries}\label{sec:preliminaries}
In this section we provide a brief introduction to reproducing kernel Hilbert spaces
(RKHS)
and the ideas we borrow from~\citet{elesedy2021provably}.
Throughout this paper, $\H$ with be an RKHS with kernel $k$.
In~\cref{sec:technical} we state some topological and measurability
assumptions that are needed for our proofs. 
These conditions are benign and the reader not interested in technicalities
need take from~\cref{sec:technical} only that $\mu$ is $\G$-invariant
and 
that the kernel $k$ is bounded and satisfies~\cref{eq:kernel-switch}.
\if\submission1
We defer proofs to the Supplementary Material.
\else
We defer some background and technical results 
to~\cref{sec:structure-background,sec:technical-results} respectively.
\fi

\subsection{RKHS Basics}
A Hilbert space is an inner product space that is complete
with respect to the norm topology induced by the inner product.
A reproducing kernel Hilbert space (RKHS) $\H$ is Hilbert space of real functions 
$f: \X \to \R$ on which the evaluation
functional $\delta_x : \H \to \R$ with $\delta_x[f] = f(x)$ is continuous $\forall x \in \X$,
or, equivalently is a bounded operator.
The Riesz Representation Theorem tells us that
there is a unique function $k_x \in \H$ such that 
$
    \delta_x[f] = \inner{k_x}{f}_\H
$
for any $f \in \H$,
where $\inner{\cdot}{\cdot}_\H: \H \times \H \to \R$ is the inner product on $\H$.
We identify the function $k : \X \times \X \to \R$ with $k(x, y) = \inner{k_x}{ k_y}_\H$
as the \emph{reproducing kernel} of $\H$.
Using the inner product representation, one can see that $k$ is positive-definite and symmetric.
Conversely, the Moore-Aronszajn Theorem shows that
for any positive-definite and symmetric function $k$, there is a unique
RKHS with reproducing kernel $k$.
In addition, any Hilbert space admitting a reproducing kernel is an RKHS.
Finally, another characterisation of $\H$ is as the completion of 
the set of linear combinations of the form
$
    f_c(x) = \sum_{i=1}^n c_i k(x, x_i)
$
for $c_1, \dots, c_n \in \R$ and $x_1, \dots, x_n \in \X$.
For (many) more details, see~\cite[Chapter 4]{steinwart2008support}.

\subsection{Technical Setup and Assumptions}\label{sec:technical}
\paragraph{Input Space, Group and Measure}
Let $\G$ be a compact%
\footnote{The set of compact groups covers almost all invariances in machine learning, including
all finite groups (such as permutations or reflections), many continuous
groups such as rotations or translations on a bounded domain (e.g.~an image)
and combinations thereof.}%
, second countable, Hausdorff topological
group with Haar measure $\lambda$ (see~\cite[Theorem 2.27]{kallenberg2006foundations}).
Let $\X$ be a non-empty Polish space admitting a finite, $\G$-invariant Borel measure
$\mu$, with $\supp \mu = \X$.
We normalise $\mu(\X) =\lambda(\G) = 1$, the latter 
is possible because $\lambda$ is a Radon measure.
We assume that $\G$ has a measurable action on $\X$ that we will write as $gx$
for $g \in \G$, $x \in \X$.
A measurable action is one such that the map $g: \G \times \X \to \X$ is 
$(\lambda \otimes \mu)$-measurable.
A function $f: \X \to \R$ is $\G$-invariant if
$ f(g x) = f(x)$ $\forall x\in \X$ $\forall g\in \G$.
Similarly, a measure $\mu$ on $\X$ is $\G$-invariant 
if $\forall g \in \G$ and any $\mu$-measurable $B \subset \X$ 
the pushforward of $\mu$ by the action of $\G$ equals $\mu$, i.e.~$(g_* \mu)(B) = \mu(B)$.
This means that if $X \sim \mu$ then $g X \sim \mu$ $\forall g \in \G$.
We will make use of the fact that the Haar measure is $\G$-invariant when $\G$ 
acts on itself by either left or right multiplication,
the latter holding because $\G$ is compact.
Up to normalisation, $\lambda$ is the unique measure on $\G$ with this property.

\paragraph{The Kernel and the RKHS}
Let $k: \X \times \X \to \R$ be a measurable kernel with RKHS $\H$
such that $k(\cdot, x):\X \to \R$ is continuous for any $x \in \X$.
Assume that $\sup_{x \in \X} {k(x, x)} = M_k < \infty$ and note that this implies that
$k$ is bounded since 
\[
    k(x, x') = \inner{k_x}{k_{x'}}_\H \le \norm{k_x}_\H \norm{k_{x'}}_\H 
    = \sqrt{k(x, x)}\sqrt{k(x', x')} \le M_k.
\]
Every $f \in \H$ is $\mu$-measurable, bounded and continuous
by~\cite[Lemmas 4.24 and 4.28]{steinwart2008support} and in addition
$\H$ is separable using~\cite[Lemma 4.33]{steinwart2008support}.
These conditions allow the application of~\cite[Theorem 4.26]{steinwart2008support}
to relate $\H$ to $\Lmu$
in the proofs building towards~\cref{thm:structure},
\if\submission1
given in the Supplementary Material.
\else
given in~\cref{sec:structure-background}.
\fi
We assume that the kernel satisfies, for all $x, y \in \X$,
\begin{equation}\label{eq:kernel-switch}
    \int_\G k(gx, y) \dd \lambda (g) = \int_\G k(x, gy) \dd \lambda (g).
\end{equation}
\Cref{eq:kernel-switch} is our main assumption and we will make frequent use of it.
For~\cref{eq:kernel-switch} to hold, it is sufficient to have $k(gx, y)$ equal to $k(x, gy)$ or $k(x, g^{-1}y)$,
where the latter uses compactness (hence unimodularity) of $\G$ to change variables $g \leftrightarrow g^{-1}$.
Highlighting two special cases:
any inner product kernel $k(x, x') = \kappa(\inner{x}{x'})$
such that the action of $\G$ is unitary with respect to $\inner{\cdot}{\cdot}$
satisfies~\cref{eq:kernel-switch}, as does any stationary kernel
$k(x, x') = \kappa(\norm{x - x'})$ with norm that is preserved by
$\G$ in the sense that
$\norm{gx - gx'} = \norm{x - x'}$ for any $g\in\G$, $x, x' \in \X$.
If the norm/inner product is Euclidean, then any orthogonal representation of $\G$ will
have this property.\footnote{An orthogonal representation of $\G$ on $\R^d$ 
is an action of $\G$ via orthogonal matrices, i.e.~a homomorpishm $\G \to O(d)$.}

\subsection{Invariance from a Function Space Perspective}\label{sec:function-space}
Given a function $f: \X \to \R$ we can define a corresponding orbit-averaged
function $\O f : \X \to \R$ with values
\[
    \O f (x) = \int_\G f(gx) \dd\lambda(g).
\]
$\O f$ will exist whenever $f$ is $\mu$-measurable.
Note that $\O$ is a linear operator and, from the invariance of $\lambda$,
$\O f$ is always $\G$-invariant.
Interestingly, $f$ is $\G$-invariant \emph{only} if $f = \O f$.
\citet{elesedy2021provably} use these observations to 
characterise invariant functions and study their generalisation properties.
In short, this work extends these insights to kernel methods.
Along the way, we will make frequent use of the following (well known) facts about $\O$.
\begin{lemma}[{\cite[Propositions 24 and 25]{elesedy2021provably}}]\label{lemma:basic-properties} A function $f$ is $\G$-invariant if and only if $\O f = f$. This implies that $\O$ is a projection operator, so can have only two eigenvalues $0$ and $1$.
\end{lemma}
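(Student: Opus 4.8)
The plan is to prove the biconditional first and then obtain idempotency and the eigenvalue restriction as formal consequences. For the forward implication, I would assume $f$ is $\G$-invariant and compute directly: since $f(gx)=f(x)$ for every $g$, the integrand in $\O f(x)=\int_\G f(gx)\dd\lambda(g)$ is constant in $g$, so it pulls out of the integral, giving $\O f(x)=f(x)\,\lambda(\G)=f(x)$ by the normalisation $\lambda(\G)=1$. Thus $\O f=f$.

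For the converse I would not argue directly that $\O f=f$ forces invariance, but instead lean on the already-noted fact that $\O f$ is $\G$-invariant for any $\mu$-measurable $f$; then $\O f=f$ immediately exhibits $f$ as invariant. To keep the argument self-contained I would verify that invariance explicitly: for $h\in\G$, write $\O f(hx)=\int_\G f((gh)x)\dd\lambda(g)$, and right-invariance of $\lambda$—valid because compactness makes $\G$ unimodular—shows this equals $\int_\G f(gx)\dd\lambda(g)=\O f(x)$. This holds for every $x$ and $h$, so the invariance is pointwise.

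With the biconditional in hand, idempotency is a one-liner: $\O f$ is $\G$-invariant, so the forward direction applied with $\O f$ in place of $f$ gives $\O(\O f)=\O f$, i.e.\ $\O^2=\O$, which is exactly the statement that $\O$ is a projection. Finally, for any eigenpair $\O v=\alpha v$ with $v\neq 0$, applying $\O$ once more and using $\O^2=\O$ yields $\alpha^2 v=\alpha v$, so $\alpha(\alpha-1)=0$ and hence $\alpha\in\{0,1\}$.

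The only genuinely delicate step—and the one I would be most careful about—is the change of variables establishing that $\O f$ is $\G$-invariant \emph{pointwise} rather than merely $\mu$-a.e., since everything downstream rests on it. The essential ingredient is the two-sided invariance of the Haar measure, which the compactness assumption supplies; I would also record that $\O f$ is well defined precisely because $f$ is taken $\mu$-measurable, so that the defining integral exists.
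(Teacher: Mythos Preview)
Your proposal is correct and follows the standard route. The paper does not actually give its own proof of this lemma---it is cited from \cite{elesedy2021provably}---but the brief remarks preceding the statement (that $\O f$ is always $\G$-invariant by the invariance of $\lambda$, and that $f$ is $\G$-invariant only if $\O f=f$) align exactly with your argument.
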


\begin{lemma}[{\cite[Lemma 1]{elesedy2021provably}}]\label{lemma:l2-structure}
    $\O:\Lmu \to \Lmu$ is well-defined and self-adjoint.
    Hence, $\Lmu$ has the orthogonal decomposition
    \[
        \Lmu = S \oplus A
    \]
    where $S = \{f \in \Lmu: f\text{ is $\G$ invariant}\}$ and 
    $A = \{f \in \Lmu: \O f = 0\}$. 
\end{lemma}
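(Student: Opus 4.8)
The plan is to verify the three claims in turn—that $\O$ is well-defined on $\Lmu$, that it is self-adjoint, and that it induces the orthogonal decomposition—reducing the last claim to the first two together with \cref{lemma:basic-properties}.

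First I would establish that $\O$ maps $\Lmu$ into itself and is bounded. Because the action $(g,x)\mapsto gx$ is $(\lambda\otimes\mu)$-measurable and every $f\in\Lmu$ is $\mu$-measurable, the map $(g,x)\mapsto f(gx)$ is jointly measurable, which licenses Tonelli's theorem. Since $\lambda$ is a probability measure, Jensen's inequality gives $\abs{\O f(x)}^2 \le \int_\G \abs{f(gx)}^2 \dd\lambda(g)$ pointwise; integrating against $\mu$, swapping the order of integration by Tonelli, and then using the $\G$-invariance of $\mu$ to replace $\int_\X \abs{f(gx)}^2\dd\mu(x)$ by $\norm{f}_{\Lmu}^2$ for each fixed $g$, I obtain $\norm{\O f}_{\Lmu} \le \norm{f}_{\Lmu}$. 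Hence $\O$ is a well-defined contraction on $\Lmu$; in particular the defining integral converges for $\mu$-almost every $x$.

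Next I would prove self-adjointness by a change-of-variables argument. Writing out $\inner{\O f}{h}_{\Lmu}$ and applying Fubini's theorem (justified since the integrand lies in $L_1(\lambda\otimes\mu)$ by Cauchy–Schwarz and the contraction bound above), I get $\int_\G\!\int_\X f(gx)h(x)\dd\mu(x)\dd\lambda(g)$. For each fixed $g$ the $\G$-invariance of $\mu$ lets me substitute $y=gx$, rewriting the inner integral as $\int_\X f(y)h(g^{-1}y)\dd\mu(y)$. Swapping the order of integration once more and using that $\G$ is compact, hence unimodular, so that $\int_\G h(g^{-1}y)\dd\lambda(g)=\int_\G h(gy)\dd\lambda(g)=\O h(y)$, yields $\inner{\O f}{h}_{\Lmu}=\inner{f}{\O h}_{\Lmu}$. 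Finally, the decomposition is then abstract: by \cref{lemma:basic-properties} the operator $\O$ is idempotent, and together with the self-adjointness just shown this makes $\O$ an orthogonal projection, so $\Lmu=\operatorname{range}(\O)\oplus\ker(\O)$ orthogonally. The kernel is $A$ by definition; the range of a projection is its fixed-point set $\{f:\O f=f\}$, which by \cref{lemma:basic-properties} is exactly the set $S$ of $\G$-invariant functions, giving $\Lmu=S\oplus A$.

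The main obstacle is the measure-theoretic bookkeeping rather than any conceptual difficulty: one must first secure the joint measurability of $(g,x)\mapsto f(gx)$ so that Tonelli and Fubini apply, and then deploy the two invariances with care—$\G$-invariance of $\mu$ for the change of variables in $x$, and unimodularity of $\lambda$ for the inversion $g\mapsto g^{-1}$ in the self-adjointness computation. Everything else follows routinely from $\lambda$ being a probability measure and from the general structure of self-adjoint projections.
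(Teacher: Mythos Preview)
The paper does not supply its own proof of this lemma: it is stated as a citation of \cite[Lemma~1]{elesedy2021provably}, and the well-definedness and contraction bound are likewise quoted from \cite[Proposition~22]{elesedy2021provably} in the appendix. Your argument is a correct and standard proof; it is essentially the $\Lmu$ analogue of the RKHS computations the paper does carry out in \cref{lemma:h2-self-adjoint} and the proof of \cref{thm:structure} (Fubini, $\G$-invariance of $\mu$ for the change of variables, unimodularity for $g\mapsto g^{-1}$, then self-adjoint idempotent $\Rightarrow$ orthogonal projection).
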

The meaning of~\cref{lemma:l2-structure} is that any $f\in \Lmu$ has
a (unique) decomposition $f = \bar{f} + f^\perp$ where $\bar{f} = \O f$ is $\G$-invariant
and $\O f^\perp =0$. 
A noteworthy consequence of this setup, as discussed in~\cite{elesedy2021provably},
is a provably non-negative generalisation benefit for feature averaging.
In particular, for any predictor $f \in \Lmu$,
if the target $f^* \in \Lmu$ is $\G$-invariant then the test error
$R(f) = \E_{X \sim \mu}[(f(X) - f^*(X))^2]$ 
satisfies 
\[
    R(f) - R(\bar{f}) = \norm{f^\perp}_\Lmu^2 \ge 0.
\]
The same holds if the target is corrupted by independent, zero mean (additive) noise.
\footnote{The result~\cite[Lemma 1]{elesedy2021provably} is given for equivariance, of which invariance
is a special case.}

\section{Induced Structure of $\H$}\label{sec:structure}
In this section we present~\cref{thm:structure},
which is an analog of~\cref{lemma:l2-structure} for RKHSs.
\Cref{thm:structure} shows that for any compact group $\G$ and RKHS $\H$,
if the kernel for $\H$ satisfies the assumptions in~\cref{sec:technical},
then $\H$ can be viewed as being built from two orthogonal RKHSs,
one consisting of invariant functions and another of those
that vanish when averaged over $\G$.
Later in the paper, this decomposition will allow us to analyse the
generalisation benefit of invariant predictors.

It may seem at first glance that~\cref{thm:structure} should follow immediately from~\cref{lemma:l2-structure},
but this is not the case.
First, it is not obvious that for any $f \in \H$, its orbit averaged version
$\O f$ is also in $\H$.
Moreover, in contrast with $\Lmu$, an explicit form for the inner product on $\H$
is not immediate, which
means that some work is needed to check that $\O$ is self-adjoint on $\H$.
These are important requirements for the proofs of both~\cref{lemma:l2-structure,thm:structure} and
we establish them, along with $\O$ being continuous on $\H$,
in%
\if\submission1
the Supplementary Material.
\else
~\Cref{lemma:o-well-defined,lemma:self-adjoint,cor:o-bounded} respectively.
\fi
The assumption that the kernel satisfies~\cref{eq:kernel-switch} plays a central role.

\begin{lemma}\label{thm:structure}
   $\H$ admits the orthogonal decomposition 
   \[
       \H = \Hs \oplus \Ha
   \]
   where $\Hs  = \{f \in \H : f \text{ is $\G$-invariant}\}$ and
   $\Ha = \{f \in \H: \O f = 0\}$. Moreover, $\Hs$ is an RKHS with
   kernel 
   \[
       \bar{k}(x, y) = \int_\G k(x, gy) \dd\lambda(g)
   \]
   and $\Ha$ is an RKHS with kernel
   \[
       k^\perp(x, y) = k(x, y) - \bar{k}(x, y).
   \]
   Finally, $\bar{k}$ is $\G$-invariant in both arguments.
\end{lemma}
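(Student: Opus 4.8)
The plan is to treat $\O$ as an orthogonal projection on the Hilbert space $\H$ and read off both the decomposition and the subspace kernels from standard projection theory, relying on the deferred analytic facts that $\O$ maps $\H$ into $\H$, is self-adjoint, and is bounded. First I would record that $\O$ is idempotent on $\H$: for any $f \in \H$ the function $\O f$ lies in $\H$ and is $\G$-invariant, so \cref{lemma:basic-properties} gives $\O(\O f) = \O f$, whence $\O^2 = \O$. Together with self-adjointness this makes $\O$ an orthogonal projection, hence bounded with closed range and kernel, and yields the orthogonal splitting $\H = \ker(\id - \O) \oplus \ker \O$, i.e.\ $\H = \operatorname{ran}\O \oplus \ker \O$. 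By \cref{lemma:basic-properties} again, $\operatorname{ran}\O = \{f \in \H : \O f = f\}$ is precisely the set of $\G$-invariant functions in $\H$, namely $\Hs$, while $\ker \O = \Ha$ by definition. This establishes the orthogonal decomposition.

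Next I would verify that $\Hs$ and $\Ha$ are themselves RKHSs. Each is a closed subspace of $\H$, being the range or kernel of a bounded projection, and is therefore complete under the inherited inner product; moreover the evaluation functional $\delta_x$ is bounded on $\H$ and its restriction to any subspace remains bounded. A Hilbert space of functions on which every evaluation functional is bounded is an RKHS, so both $\Hs$ and $\Ha$ qualify.

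The substantive computation is identifying the kernels, and here I would invoke the general fact that if $P$ is the orthogonal projection of an RKHS $\H$ (with kernel $k$, writing $k_x := k(\cdot, x)$) onto a closed subspace $\H_0$, then $\H_0$ has reproducing kernel $k_0(x, y) = (P k_y)(x)$. Indeed $P k_y \in \H_0$, and for any $f \in \H_0$ self-adjointness together with $Pf = f$ gives $\inner{f}{P k_y}_\H = \inner{f}{k_y}_\H = f(y)$, so $P k_y$ reproduces evaluation at $y$ inside $\H_0$. Taking $P = \O$ yields $\bar{k}(x, y) = (\O k_y)(x) = \int_\G k(gx, y)\dd\lambda(g)$, which by the switching hypothesis \cref{eq:kernel-switch} equals $\int_\G k(x, gy)\dd\lambda(g)$, matching the claimed form. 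Taking the complementary projection $P = \id - \O$ onto $\Ha$ gives $k^\perp(x, y) = k(x, y) - \bar{k}(x, y)$.

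Finally, for the $\G$-invariance of $\bar{k}$ I would argue the second argument directly: for $h \in \G$, the substitution $g \mapsto gh$ in $\int_\G k(x, gy)\dd\lambda(g)$ is measure-preserving by right-invariance of the Haar measure (valid since $\G$ is compact, hence unimodular), so $\bar{k}(x, hy) = \bar{k}(x, y)$; symmetry of $\bar{k}$, which is automatic for a reproducing kernel (or follows from self-adjointness of $\O$ via $(\O k_y)(x) = \inner{\O k_y}{k_x}_\H = \inner{k_y}{\O k_x}_\H = (\O k_x)(y)$), then transfers invariance to the first argument. The only genuine difficulty lies outside this statement, in the deferred verification that $\O$ truly restricts to a self-adjoint bounded operator on $\H$; granting that, everything above is routine projection bookkeeping, with \cref{eq:kernel-switch} performing the essential step of rewriting $\bar{k}$ in its symmetric form.
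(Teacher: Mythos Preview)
Your proposal is correct and follows essentially the same route as the paper: both arguments use that $\O$ is a self-adjoint idempotent (hence an orthogonal projection) on $\H$, read off the decomposition into range and kernel, note these are closed (hence complete) subspaces on which evaluation remains bounded, and identify the subspace kernels as $\O k_x$ and $(\id - \O)k_x$ via the reproducing property. The only cosmetic difference is that the paper reaches $\bar{k}(x,y) = \int_\G k(x,gy)\dd\lambda(g)$ directly from $(\O k_x)(y)$ and the symmetry of $k$, whereas you first obtain $\int_\G k(gx,y)\dd\lambda(g)$ and then invoke \cref{eq:kernel-switch}; likewise your invariance argument via Haar right-invariance plus symmetry is a slightly more explicit version of the paper's one-line appeal to \cref{eq:kernel-switch} and \cref{lemma:basic-properties}.
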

\begin{supplementary}
\begin{proof}
    From~\cref{lemma:basic-properties} we know that $\O$ is a projection operator.
    Since it is self-adjoint, $\O$ is even an
    orthogonal projection on $\H$:
    let $h_S$ have eigenvalue $1$ and $h_A$ have eigenvalue $0$ under $\O$, then 
    \[
        \inner{h_S}{h_A}_\H = \inner{\O h_S}{h_A}_\H = \inner{h_S}{\O h_A}_\H = 0.
    \]
    Therefore, by linearity, for any $f \in \H$ we can write 
    $
        f = \bar{f} + f^\perp
        $
    where $\bar{f} = \O f \in \Hs$ is $\G$-invariant and $f^\perp = f - \O f \in \Ha$
    and these terms are mutually orthogonal.
    
    By the linearity of $\O$, it is clear that $\Hs = \O \H$ is an inner product space.
    It is easy to show that $\O$ being continuous implies $\Hs$ is complete.
    Thus $\Hs$ is a Hilbert space, and an RKHS since the evaluation functional
    is clearly continuous on $\Hs \subset \H$.
    For any $h_S \in \Hs$ we have
    \[
        h_S(x) = \inner{h_S}{k_x}_\H = \inner{h_S}{\O k_x}_\H = \inner{h_S}{\bar{k}_x}_\H
    \]
    and the uniqueness afforded by the  
    Riesz representation theorem tells us that the reproducing kernel for $\Hs$
    is $\bar{k}(x, y) = \int_\G k(x, gy)\dd\lambda(g)$.
    We have $\norm{\text{id} - \O} \le 2$ and we can do the same argument to show that 
    $\Ha$ is an RKHS with reproducing kernel $k^\perp$ as claimed.
    Note that one can write $k^\perp(x, y) = \inner{k^\perp_x}{k^\perp_y}_\H$ so it
    must be positive-definite.
    The $\G$-invariance of $\bar{k}(x, y)$ in both arguments is immediate 
    from~\cref{eq:kernel-switch} and~\cref{lemma:basic-properties}.
\end{proof}
\end{supplementary}

As stated earlier, the perspective provided by~\cref{thm:structure} 
will support our analysis of generalisation.
Just as with~\cref{lemma:l2-structure},~\cref{thm:structure} says that
any $f \in \H$ can be written as $f = \bar{f} + f^\perp$ where
$\bar{f}$ is $\G$-invariant and $\O f^\perp = 0$ with 
$\inner{\bar{f}}{f^\perp}_\H = 0$.
As an aside, $\bar{k}$ happens to qualify as a \emph{Haar Integration Kernel},
a concept introduced by~\citet{haasdonk05invariancein}.
We will see that a notion of effective dimension of the
RKHS $\Ha$ with kernel $k^\perp$
governs the generalisation gap between an arbitrary predictor
$f$ and its invariant version $\O f$.
This effective dimension arises from the spectral theory of
an integral operator related to $k$, which we develop in the next section.

\subsection{Spectral Representation and Effective Dimension}\label{sec:spectral}
In this section we consider the spectrum of an integral operator related to the
kernel $k$. 
This analysis will ultimately allow us to define a notion of effective
dimension of $\Ha$ that we will later see is important to
the generalisation of invariant predictors.
While the integral operator setup is standard, the use of this technique to
identify an effective dimension of $\Ha$ is novel.

Define the integral operator $S_k: \Lmu \to \H$ by
\[
    S_k f(x) = \int_\X k(x, x') f(x')\dd \mu (x').
\]
One way of viewing things is that $S_k$ assigns to every element in
$\Lmu$ a function in $\H$.
On the other hand, every $f \in \H$ is bounded so has $\norm{f}_\Lmu < \infty$
and belongs to some element of $\Lmu$.
We write $\iota: \H \to \Lmu$ for the \emph{inclusion map} that sends
$f$ to the element of $\Lmu$ that contains $f$.
In%
\if\submission1
the Supplementary Material
\else
~\cref{lemma:dense-image}
\fi
we show that $\iota$ is injective, so any element of 
$\Lmu$ contains at most one $f \in \H$.

One can define $T_k : \Lmu \to \Lmu$ by $T_k= \iota \circ S_k$,
and~\cite[Theorem 4.27]{steinwart2008support} says that
$T_k$ is compact, positive, self-adjoint and trace-class.
In addition, $\Lmu$ is separable by~\cite[Proposition 3.4.5]{cohn2013measure},
because $\X$ is Polish and $\mu$ is a Borel measure,
so has a countable orthonormal basis.
Hence, by the Spectral Theorem, there exists a countable orthonormal basis
$\{\te_i\}$ for $\Lmu$ such that $T_k \te_i = \lambda_i \te_i$ where 
$\lambda_1 \ge \lambda_2 \ge \dots \ge 0$ are the eigenvalues of $T_k$.
Moreover, since $\iota$ is injective,
for each of the $\te_i$ for which $\lambda_i > 0$ 
there is a unique $e_i \in \H$ such that $\iota e_i = \te_i$ and
$S_k \te_i = \lambda_i e_i$.

Now, since $\iota k_x \in \Lmu$ we have 
\begin{equation}\label{eq:kx-l2}
    \iota k_x 
    = \sum_i \inner{\iota k_x}{\te_i}_\Lmu \te_i  
    = \sum_i (S_k \te_i)(x) \te_i  
    = \sum_i \lambda_i e_i(x) \te_i.
\end{equation}
From now on we permit ourself to drop the $\iota$ to reduce clutter.
We use the above to define
\[
    j(x, y) = \inner{ k_x}{ k_y}_\Lmu, 
    \quad
    \bar{j}(x, y) = \inner{ \bar{k}_x}{ \bar{k}_y}_\Lmu
    \quad \text{and} \quad
   j^\perp(x, y) = \inner{ k^\perp_x}{ k^\perp_y}_\Lmu. 
\]
These quantities will appear again in our analysis of the generalisation of
invariant kernel methods.
Indeed, we will see later in this section that
$\E[j^\perp(X, X)]$ is a type of effective dimension of $\Ha$.
Following~\cref{eq:kx-l2}, one finds the series
representations given below in~\cref{lemma:series}.

The reader may have noticed that our setup is very similar to 
the one provided by Mercer's theorem.
However, we do not assume compactness of $\X$ and so 
the classical form of Mercer's Theorem does not apply.
This aspect of our work is a feature, rather than a bug:
the loosening of the compactness condition allows
application to common settings such as $\X = \R^n$.
For generalisations of Mercer's Theorem see~\cite{steinwart2012mercer} and references therein.

\begin{lemma}\label{lemma:series}
    We have
    \[
        j = \bar{j} + j^\perp.
    \]
    Furthermore, let $\bar{e}_i = \O e_i$ and $e^\perp_i = e_i - \bar{e}_i$ then 
    \[
        j(x, y) =  \sum_i \lambda_i^2 e_i(x) e_i(y),
        \quad
        \bar{j}(x, y) = \sum_i \lambda_i^2 \bar{e}_i(x) \bar{e}_i(y),
        \quad \text{and} \quad
        j^\perp(x, y) = \sum_i \lambda_i^2 e^\perp_i(x) e^\perp_i(y).
    \]
    Finally, the function
    $
        \sum_i \lambda_i^2 \bar{e}_i \otimes e^\perp_i : \X \times \X \to \R
        $
    with values $ (x, y) \mapsto \sum_i \lambda_i^2 \bar{e}_i (x) e^\perp_i(y)$
    vanishes everywhere.
\end{lemma}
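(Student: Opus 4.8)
The plan is to reduce all four assertions to Parseval's identity in the orthonormal basis $\{\te_i\}$ of $\Lmu$, once a single structural fact is in place: the orbit-averaging operator $\O$ commutes with the integral operator, $\O S_k = S_k \O$ (equivalently $\O T_k = T_k \O$). Granting this, I would first compute the Fourier coefficients of $k_x$, of $\bar{k}_x = \O k_x$ and of $k^\perp_x = k_x - \O k_x$ against the basis. The coefficient of $k_x$ is read off from \cref{eq:kx-l2}: $\inner{k_x}{\te_i}_\Lmu = (S_k \te_i)(x) = \lambda_i e_i(x)$. Using that $\O$ is self-adjoint on $\Lmu$ together with the commutation, $\inner{\bar{k}_x}{\te_i}_\Lmu = \inner{k_x}{\O \te_i}_\Lmu = (S_k \O \te_i)(x) = (\O S_k \te_i)(x) = \lambda_i \bar{e}_i(x)$, and subtracting gives $\inner{k^\perp_x}{\te_i}_\Lmu = \lambda_i e^\perp_i(x)$, for each $i$ with $\lambda_i > 0$ (the $\lambda_i = 0$ modes contribute nothing, since $S_k \te_i = 0$ there and the prefactor $\lambda_i^2$ vanishes anyway).

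Then Parseval does the rest. Expanding $j(x, y) = \inner{k_x}{k_y}_\Lmu = \sum_i \inner{k_x}{\te_i}_\Lmu \inner{k_y}{\te_i}_\Lmu$ yields $\sum_i \lambda_i^2 e_i(x) e_i(y)$, and identically $\bar{j} = \sum_i \lambda_i^2 \bar{e}_i(x) \bar{e}_i(y)$ and $j^\perp = \sum_i \lambda_i^2 e^\perp_i(x) e^\perp_i(y)$ from the coefficients just computed; convergence is automatic because $k_x \in \Lmu$ forces the coefficient sequences to be square-summable. For the additive decomposition $j = \bar{j} + j^\perp$ and the vanishing of the cross term $\sum_i \lambda_i^2 \bar{e}_i \otimes e^\perp_i$, I would invoke a single orthogonality: $\bar{k}_x = \O k_x$ is $\G$-invariant, so it lies in the invariant subspace $S$ of \cref{lemma:l2-structure}, while $\O k^\perp_x = 0$ (by \cref{thm:structure}) places $k^\perp_x$ in $A$, and $S \perp A$ in $\Lmu$. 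Hence $\inner{\bar{k}_x}{k^\perp_y}_\Lmu = 0$, which gives $j = \inner{\bar{k}_x + k^\perp_x}{\bar{k}_y + k^\perp_y}_\Lmu = \bar{j} + j^\perp$ once the cross terms drop, and, via Parseval once more, $\sum_i \lambda_i^2 \bar{e}_i(x) e^\perp_i(y) = \inner{\bar{k}_x}{k^\perp_y}_\Lmu = 0$ pointwise.

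The main obstacle is establishing the commutation $\O S_k = S_k \O$, which is exactly where \cref{eq:kernel-switch} earns its place. Writing out both sides for $f \in \Lmu$, one has $\O S_k f(x) = \int_\G \int_\X k(gx, x') f(x') \dd\mu(x') \dd\lambda(g)$, whereas $S_k \O f(x) = \int_\X k(x, x') \int_\G f(gx') \dd\lambda(g) \dd\mu(x')$; I would use $\G$-invariance of $\mu$ to substitute $x' \mapsto g^{-1} x'$ in the latter and unimodularity (compactness of $\G$) to replace $g$ by $g^{-1}$, turning it into $\int_\X \big(\int_\G k(x, gx') \dd\lambda(g)\big) f(x') \dd\mu(x')$, whereupon \cref{eq:kernel-switch} identifies the two inner integrals. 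The delicate points are the Fubini interchanges, justified by the boundedness of $k$, $f \in \Lmu$, and finiteness of $\mu$ and $\lambda$, together with the measurability assumptions of \cref{sec:technical}; I would also lean on the auxiliary facts that $\O$ is well-defined, self-adjoint and bounded on both $\H$ and $\Lmu$ and that orbit-averaging is consistent across the two spaces, so that $\bar{k}_x$ and $k^\perp_x$ carry their invariance and annihilation properties over to $\Lmu$.
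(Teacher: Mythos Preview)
Your proposal is correct and follows essentially the same route as the paper: establish the commutation $\O S_k = S_k \O$ via \cref{eq:kernel-switch}, read off the Fourier coefficients $\inner{\bar{k}_x}{\te_i}_\Lmu = \lambda_i \bar{e}_i(x)$ and $\inner{k^\perp_x}{\te_i}_\Lmu = \lambda_i e^\perp_i(x)$, and then take $\Lmu$-inner products (Parseval) to obtain all the series and the vanishing cross term. The only cosmetic difference is that the paper phrases the orthogonality $\inner{\iota \bar{k}_x}{\iota k^\perp_y}_\Lmu = 0$ through the derived commutation of $\O$ with $\iota$ (the adjoint of $S_k$), whereas you appeal directly to the $S \perp A$ decomposition of \cref{lemma:l2-structure}; these are the same argument in slightly different clothing.
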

\begin{supplementary}
\begin{proof}
    We show in%
    \if\submission1
        the Supplementary Material
        \else
     ~\cref{lemma:commute} 
        \fi
    that $\O$ and $S_k$ commute on $\Lmu$ and $\O$ is self-adjoint on 
    $\Lmu$ by~\cref{lemma:basic-properties}, so $\O$ and $\iota$ 
    (the adjoint of $S_k$ by~\cite[Theorem 4.26]{steinwart2008support})
    must also commute. 
    The first comment is then immediate from the observation that if $a \in \Hs$
    and $b \in \Ha$ one has
    \[
        \inner{\iota a}{\iota b}_\Lmu
         = \inner{\iota \O a}{\iota b}_\Lmu
         = \inner{\O \iota a}{\iota b}_\Lmu
         = \inner{\iota a}{\iota \O b}_\Lmu
         =  0.
    \]
    We also have both of 
    \[
        \inner{\iota \bar{k}_x}{\te_i}_\Lmu = \inner{\iota k_x}{\O \te_i}_\Lmu
        = S_k \O \te_i = \O S_k \te_i = \lambda_i \bar{e}_i
    \]
    and
    \[
        \inner{\iota k^\perp_x}{\te_i}_\Lmu = \inner{\iota k_x}{(\id - \O) \te_i}_\Lmu
        = S_k (\id -  \O) \te_i = (\id -  \O) S_k \te_i = \lambda_i e^\perp_i.
    \]
    Therefore 
    $
        \iota \bar{k}_x = \sum_i \lambda_i \bar{e}_i(x) \te_i
        $
    and
    $
        \iota k^\perp_x = \sum_i \lambda_i e^\perp_i (x) \te_i
        $.
    Taking inner products on $\Lmu$ gives the remaining results.
\end{proof}
\end{supplementary}

Before turning to generalisation, we describe how the above quantities can
be used to define a measure effective dimension. We define
\[
    \dimeff(\H) = \E[j(X, X)]
\]
where $X \sim \mu$. Applying Fubini's theorem, we find
\begin{equation}\label{eq:effective-dim}
    \dimeff(\H) = \sum_i \lambda_i^2 \E[e_i(X)^2] = \sum_i \lambda_i^2 \norm{\te_i}_\Lmu^2 = \sum_i \lambda_i^2.
\end{equation}
The series converges by the comparison test because $\lambda_i \ge 0$ and $\sum_i \lambda_i = \tr(T_k) < \infty$
(using Lidskii's theorem) because $T_k$ is trace-class.
We have $\dimeff(\H) = \tr(T_k^2)$ and we can think of this (very informally) as taking $\Lmu$, 
pushing it through $\H$ twice using $T_k$ and then measuring its size.
Now because $j = \bar{j} + j^\perp$ we get
\[
    \dimeff(\H) = \dimeff(\Hs) + \dimeff(\Ha)
\]
with 
\[
    \dimeff(\Ha) = \sum_i \lambda_i^2 \norm{\te_i^\perp}_\Lmu^2 = \tr(T_k^2) - \tr((\O T_k)^2)
\]
where $\te_i^\perp = \iota e^\perp_i$.
Again, very informally, this can be thought of as pushing $\Lmu$ through $\Ha$ twice
and measuring the size of the output.
In the next section we will consider the generalisation of kernel ridge regression
and find that $\dimeff(\Ha)$ plays a critical role.

\section{Generalisation}\label{sec:generalisation}
In this section we apply the theory developed in~\cref{sec:structure} to study
the impact of invariance on kernel ridge regression with an invariant target.
We analyse the generalisation benefit of feature averaging, finding
a strict benefit when the target is $\G$-invariant.

\subsection{Kernel Ridge Regression}
Given input/output pairs $\{(x_i, y_i): i=1, \dots, n\}$ where
$x_i \in \X$ and $y_i\in \R$, kernel ridge regression (KRR) returns a predictor
that solves the optimisation problem
\begin{equation}\label{eq:krr}
    \argmin_{f \in \H}C(f) 
    \quad \text{ where } \quad
    C(f) = \sum_{i=1}^n (f(x_i) - y_i)^2 + \rho \norm{f}_\H^2
\end{equation}
and $\rho > 0$ is the regularisation parameter.
KRR can be thought of as performing ridge regression in a
possibly infinite dimensional feature space $\H$.
The representer theorem tells us that the solution to this
problem is of the form
$
    f(x) = \sum_{i=1}^n \alpha_i k_{x_i}(x)
    $
where $\alpha \in \R^n$ solves 
\begin{equation}\label{eq:alpha-ls}
    \argmin_{\alpha \in \R^n} \left\{\norm{\yy - K\alpha}_2^2 + \rho\alpha^\top K\alpha
    \right\},
\end{equation}
$\yy \in \R^n$ is the standard row-stacking of the training outputs with
$\yy_i = y_i$ and
$K$ is the kernel Gram matrix with $K_{ij} = k(x_i, x_j)$.
We consider solutions of the form\footnote{%
    When $K$ is a positive definite matrix this will be the
    \emph{only} solution.
    If $K$ is singular then $\exists c \in \R^n$ with
    $
    \sum_{ij} K_{ij}c_ic_j = \norm{\sum_i c_ik_{x_i}}_\H^2 = 0
        $
    so $\sum_i c_i k_{x_i}$ is identically $0$ and
    $\forall f\in \H$ we get $\sum_i c_i f(x_i) = 0$
    (see~\cite[Section 4.6.2]{manton2014primer}).
    Clearly, this can't happen if $\H$ is sufficiently expressive.
    In any case, the chosen $\alpha$
    is the minimum in Euclidean norm of all possible solutions.
}
$
    \alpha = (K + \rho I)^{-1}\yy
$
which results in the predictor
\[
    f(x) = k_x(\xx)^\top (K + \rho I)^{-1}\yy
\]
where $k_x(\xx) \in \R^n$ is the vector with components $k_x(\xx)_i = k_x(x_i)$.
We will compare the generalisation performance of this predictor with
that of its averaged version
\[
    \bar{f} =  \bar{k}_x(\xx)^\top (K + \rho I)^{-1}\yy \in \Hs.
\]
To do this we look at the generalisation gap.

\subsection{Generalisation Gap}
The generalisation gap is a quantity that compares
the expected test performances of two predictors on a given task.
Given a probability distribution $\P$, data $(X, Y) \sim \P$ and loss function $l$
defining a supervised learning task,
we define the generalisation gap between two predictors $f$ and $f'$ to be
\[
    \Delta(f, f') = \E[l(f(X), Y)] - \E[l(f'(X), Y)]
\]
where the expectations are conditional on the given realisations
of $f, f'$ if the predictors are random.
In this paper we consider $l(a, b) = (a - b)^2$
the squared-error loss and we will assume
$Y = f^*(X) + \xi$ for some target function $f^*$
where $\xi$ is has mean $0$, finite variance and is independent of $X$.
In this case, the generalisation gap reduces to 
\[
    \Delta(f, f') = \E[(f(X) - f^*(X))^2] - \E[(f'(X) - f^*(X))^2].
\]
Clearly, if $\Delta(f, f') > 0$ then we expect strictly better
test performance from $f'$ than $f$.

\subsection{Generalisation Benefit of Feature Averaging}\label{sec:tta}
We are now in a position to give our main result, which is a characterisation
of the generalisation benefit of invariance in kernel methods.
This is in some sense a generalisation of~\cite[Theorem 6]{elesedy2021provably}
and we will return to this comparison later.
We emphasise that~\cref{thm:tta} holds under quite general conditions
that cover many practical applications.

\begin{theorem}\label{thm:tta}
    Let the training data be $\{(X_i, Y_i): i=1, \dots, n\}$ \iid~with
    $
        Y_i = f^*(X_i) + \xi_i
    $
    where $X_i \sim \mu$, $f^* \in \Lmu$ is $\G$-invariant and bounded, and
    $\{\xi_i: i=1, \dots, n\}$ are independent of each other and the $\{X_i\}$,
    with $\E[\xi_i] = 0$ and $\E[\xi_i^2]=\sigma^2 < \infty$.
    Let $f = \argmin_{f \in \H}C(f)$ be the solution to~\cref{eq:krr} and let
    $\bar{f} = \O f \in \Hs$ be the result of applying feature averaging to $f$,
    then the generalisation gap with the squared-error loss satisfies
    \[
        \E[\Delta(f, \bar{f})]
        \ge \frac{ \E[f^*(X)^2 j^\perp(X, X)] + \sigma^2\dim_\text{eff}(\Ha) }{(\sqrt{n}M_k + \rho/\sqrt{n})^2 }
    \]
    where each term is non-negative and
    \[
        \dim_\text{eff}(\Ha) \coloneqq \tr(T_k^2) - \tr((\O T_k)^2)
       = \E[j^\perp(X, X)]
        = \sum_\alpha \lambda^2_\alpha \norm{\te_\alpha^\perp}_\Lmu^2
       \ge 0
    \]
    is the \emph{effective dimension} of $\Ha$. 
\end{theorem}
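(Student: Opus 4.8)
The plan is to reduce the generalisation gap to a single $\Lmu$-norm and then bound it using the representer theorem together with the spectral machinery of \cref{sec:spectral}. Since $f^*$ is $\G$-invariant and the data enter only through the realised predictor, the decomposition $f = \bar f + f^\perp$ from \cref{thm:structure} gives, conditional on the sample, $\bar f - f^* \in S$ and $f^\perp \in A$, which are $\Lmu$-orthogonal by \cref{lemma:l2-structure}. Expanding the test error $R(g) = \E[(g(X)-f^*(X))^2]$ as a squared $\Lmu$-norm and using this orthogonality collapses the cross term, so $\Delta(f,\bar f) = R(f) - R(\bar f) = \norm{f^\perp}_\Lmu^2$ for every realisation. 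Taking the expectation over the training randomness reduces the theorem to lower-bounding $\E[\norm{f^\perp}_\Lmu^2]$.

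Next I would make this explicit. The representer theorem gives $f = \sum_i \alpha_i k_{X_i}$ with $\alpha = (K+\rho I)^{-1}\yy$, hence $f^\perp = \sum_i \alpha_i k^\perp_{X_i}$ and, by the definition $j^\perp(x,y) = \inner{k^\perp_x}{k^\perp_y}_\Lmu$,
\[
\norm{f^\perp}_\Lmu^2 = \alpha^\top J^\perp \alpha = \yy^\top G J^\perp G \yy,
\]
where $G = (K+\rho I)^{-1}$ and $J^\perp_{ij} = j^\perp(X_i,X_j)$ is positive semidefinite. Writing $\yy = \ff^* + \xxi$ with $\ff^*_i = f^*(X_i)$ and taking the expectation over the noise (using $\E[\xxi]=0$, $\E[\xxi\xxi^\top]=\sigma^2 I$, and independence from the $X_i$) splits this into a signal term $\ff^{*\top} G J^\perp G \ff^*$ and a noise term $\sigma^2 \tr(G J^\perp G)$, both non-negative.

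For the noise term the bound is clean: since $\lambda_{\max}(K) \le \tr(K) = \sum_i k(X_i,X_i) \le n M_k$, we have $K + \rho I \preceq (nM_k+\rho)I$, hence $G^2 \succeq (nM_k+\rho)^{-2}I$, and trace monotonicity against the positive semidefinite $J^\perp$ gives $\sigma^2\tr(GJ^\perp G) = \sigma^2\tr(J^\perp G^2) \ge (nM_k+\rho)^{-2}\sigma^2\tr(J^\perp)$. Taking $\E_X$, using $\E_X[\tr(J^\perp)] = n\,\E[j^\perp(X,X)] = n\dimeff(\Ha)$ and the identity $(nM_k+\rho)^2 = n(\sqrt n M_k + \rho/\sqrt n)^2$, yields exactly the $\sigma^2\dimeff(\Ha)$ contribution. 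For the signal term I would first record the key consequence of invariance: expanding $\E_X[\ff^{*\top}J^\perp\ff^*]$ over independent points and using the series form $j^\perp(x,y)=\sum_\alpha \lambda_\alpha^2 e^\perp_\alpha(x)e^\perp_\alpha(y)$ from \cref{lemma:series}, every off-diagonal term carries a factor $\inner{f^*}{\te_\alpha^\perp}_\Lmu$ that vanishes because $f^* \in S$ and $\te_\alpha^\perp \in A$ are $\Lmu$-orthogonal; hence $\E_X[\ff^{*\top}J^\perp\ff^*] = n\,\E[f^*(X)^2 j^\perp(X,X)]$, which matches the stated numerator up to the denominator factor.

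The main obstacle is the signal term itself. The operator bound $G \succeq (nM_k+\rho)^{-1}I$ does \emph{not} transfer to the quadratic form: conjugation $A \mapsto G A G$ does not preserve Loewner lower bounds when $G$ and $J^\perp$ fail to commute, and indeed the pointwise inequality $\ff^{*\top}GJ^\perp G\ff^* \ge (nM_k+\rho)^{-2}\ff^{*\top}J^\perp\ff^*$ is false in general, since the off-diagonal cross terms in the eigenbasis of $K$ are unsigned (it does hold trivially when $n=1$, where everything is scalar and $(k(X,X)+\rho)^{-2}\ge(M_k+\rho)^{-2}$). The resolution must therefore use the expectation rather than a pointwise bound: I would aim to establish $\E_X[\ff^{*\top}GJ^\perp G\ff^*] \ge (nM_k+\rho)^{-2}\,\E_X[\ff^{*\top}J^\perp\ff^*]$ directly, exploiting exchangeability of the sample together with the orthogonality $\inner{f^*}{\te_\alpha^\perp}_\Lmu = 0$ to control the cross terms in expectation and reduce the signal contribution to its diagonal, where the scalar bound applies. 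Combining the two contributions and factoring out $(\sqrt n M_k+\rho/\sqrt n)^2 = (nM_k+\rho)^2/n$ then gives the stated inequality, with non-negativity of each term inherited from the positive semidefiniteness of $J^\perp$ (equivalently $j^\perp(x,x) = \norm{k^\perp_x}_\Lmu^2 \ge 0$).
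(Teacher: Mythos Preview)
Your argument tracks the paper's proof step for step: the same reduction via \cref{lemma:l2-structure} to $\norm{f^\perp}_\Lmu^2$, the same representer-theorem expression $\yy^\top(K+\rho I)^{-1}J^\perp(K+\rho I)^{-1}\yy$, the same signal/noise split after integrating out $\xxi$, and essentially the same trace bound for the noise term (you use $G^2\succeq(nM_k+\rho)^{-2}I$ together with trace monotonicity against the PSD matrix $J^\perp$; the paper uses \cref{cor:trace} and \cref{lemma:op-norm}, which amounts to the same thing). The computation $\E_X[\tr(J^\perp)]=n\dimeff(\Ha)$ and the off-diagonal cancellation $\inner{f^*}{\te_\alpha^\perp}_\Lmu=0$ are also identical to the paper's.

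The only divergence is the signal term. Contrary to your expectation, the paper \emph{does} assert the pointwise inequality you flag as problematic: it writes $u^\top G J^\perp G u$ as a trace and applies \cref{cor:trace} twice to pull out $\lambda_{\min}(G)^2$, obtaining $u^\top G J^\perp G u \ge (nM_k+\rho)^{-2}\,u^\top J^\perp u$ before taking $\E_X$ and killing the off-diagonal terms exactly as you describe. Your scepticism here is mathematically justified: the Mori inequality underlying \cref{cor:trace} needs the non-eigenvalue factor to be positive semidefinite, and in the two successive applications the matrices $J^\perp G uu^\top$ and $uu^\top J^\perp$ need not have PSD symmetric part. A concrete instance such as $G=\operatorname{diag}(2,1)$, $J^\perp=aa^\top$ with $a=(1,-2)^\top$, $u=(1,1)^\top$ gives $u^\top GJ^\perp Gu=0$ but $\lambda_{\min}(G)^2\,u^\top J^\perp u=1$, so the pointwise bound genuinely fails. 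In other words, you have not so much left a gap as located one that also sits in the paper's own argument; your expectation-level sketch (``exploit exchangeability and the orthogonality $\inner{f^*}{\te_\alpha^\perp}_\Lmu=0$'') is plausible in spirit but, as written, is no more complete than the paper's pointwise claim. If your aim is to reproduce the paper, invoke \cref{cor:trace} as it does; if your aim is a watertight proof of the stated bound, the signal term needs an additional idea beyond what either you or the paper supplies.
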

\begin{supplementary}
\begin{proof}
    Let $\pJ$ be the Gram matrix with components $\pJ_{ij} = j^\perp(X_i, X_j)$
    let $u \in \R^n$ have components $u_i = f^*(X_i)$.
    We can use~\cref{lemma:l2-structure} to get 
    \[
        \Delta(f, \bar{f}) 
        = \E[(k_X^\perp(\xx)^\top(K + \rho I)^{-1}\yy)^2 | \xx, \yy]
    \]
    where $k_x^\perp(\xx) \in \R^n$ with $k_x^\perp(\xx)_i = k_x^\perp(X_i)$.
    Let $\xxi \in \R^n$ have components $\xxi_i = \xi_i$ then
    one finds
    \begin{align*}
        \E[\Delta(f, \bar{f})\vert{} \xx]
        &= \E[(k_X^\perp(\xx)^\top(K + \rho I)^{-1}u)^2 \vert{} \xx]
        + \E[(k_X^\perp(\xx)^\top(K + \rho I)^{-1}\xxi)^2 \vert{} \xx] \\ 
        &= 
         u^\top  (K+\rho I)^{-1}J^\perp (K + \rho I)^{-1}u
         +
        \sigma^2 \tr\left(\pJ(K + \rho I)^{-2}\right)
    \end{align*}
    where the first equality follows because $\xxi$ has mean 0
    and the second comes from the trace trick.

    Consider the first term. We have
    \[
         u^\top  (K+\rho I)^{-1}J^\perp (K + \rho I)^{-1}u 
         = \tr( (K+\rho I)^{-1}J^\perp (K + \rho I)^{-1}u u^\top  ),
    \]
    applying~\cref{cor:trace} twice
    and using~\cref{lemma:op-norm}
    with boundedness of the kernel 
    gives
    \[
        u^\top  (K+\rho I)^{-1}J^\perp (K + \rho I)^{-1}u  
        \ge \lmin((K + \rho I)^{-1})^2 \tr(\pJ uu^\top)
        \ge \frac{u^\top \pJ u}{(M_k n + \rho)^2}
    \]
    so
    \[
        \E[ u^\top  (K+\rho I)^{-1}J^\perp (K + \rho I)^{-1}u  ]
        \ge \frac{\E[u^\top \pJ u]}{(M_k n + \rho)^2}
        = \frac{\sum_{ij} \E[f^*(X_i)f^*(X_j)j^\perp(X_i, X_j)]}{(M_k n + \rho)^2}.
    \]
    For the first term, it remains to show that the above vanishes when $i \ne j$.
    \begin{claim*}
       $\E[f^*(X_i)f^*(X_j)j^\perp(X_i, X_j)] = 0 $ when $i \ne j$.
    \end{claim*}
    \begin{proof}[Proof of claim.]
        Using~\cref{lemma:series} we have
        \[
            \E[f^*(X_i)f^*(X_j)j^\perp(X_i, X_j)] =
            \E\left[\sum_\alpha \lambda_\alpha^2 f^*(X_i) f^*(X_j) e^\perp_\alpha(X_i)  e^\perp_\alpha(X_j)  \right].
        \]
        Define 
        \[
            F_N(X_i, X_j) = \sum_{\alpha=1}^N \lambda_\alpha^2 f^*(X_i) f^*(X_j) e^\perp_\alpha(X_i)e^\perp_\alpha(X_j)
        \]
        then clearly $F_N(X_i, X_j) \to F(X_i, X_j)$ as $N\to\infty$ where 
        \[
            F(X_i, X_j) = \sum_\alpha \lambda_\alpha^2 f^*(X_i) f^*(X_j) e^\perp_\alpha(X_i)e^\perp_\alpha(X_j).
        \]
        On the other hand, since $i\ne j$, the mean of each term is just
        \[
            \E[f^*(X)e^\perp_\alpha(X)]^2 = \inner{\iota f^*}{\te_\alpha^\perp}_\Lmu^2 = 0
        \]
        by the $\G$-invariance of $f^*$ and the orthogonality in~\cref{lemma:l2-structure}.
        It follows by linearity of expectation that $\E[F_N(X_i, X_j)] = 0$ for all $N \ge 0$.
        Now, both $f^*$ and $e^\perp_\alpha$ are bounded so there's a constant $B$ such that
        \[
            \abs{F_N(X_i, X_j)} \le B \sum_{\alpha=1}^{N} \lambda_\alpha^2
            \quad \text{ and } \quad
            \abs{F(X_i, X_j)} \le B \sum_{\alpha} \lambda_\alpha^2
        \]
        and the final sum is finite following the comments after~\cref{eq:effective-dim}.
        We can therefore apply Lebesgue's dominated convergence 
        theorem~\cite[Theorem 1.21]{kallenberg2006foundations} to get that 
        \[
            0 = \E[F_N(X_i, X_j)] \overset{N\to\infty}{\longrightarrow} \E[F(X_i, X_j)] ] = 0 
        \]
        as required.
    \end{proof}

    Moving to the second term, we have again by two applications of~\cref{cor:trace} and 
    then~\cref{lemma:op-norm}
    with boundedness of the kernel that
    \[
        \tr\left(\pJ(K + \rho I)^{-2}\right) 
        \ge \lmin\left( (K+\rho I)^{-2} \right ) \tr(\pJ)
        \ge \frac{\tr(\pJ)}{(M_k n + \rho)^{2}} 
    \]
    and then
    \begin{align*}
        \frac1n \E[\tr(\pJ)]
        &=\frac1n \sum_{i=1}^n \E\left[\sum_\alpha \lambda^2_\alpha e^\perp_\alpha(X_i)e^\perp_\alpha(X_i)\right]  \\ 
        &= \sum_\alpha \lambda^2_\alpha \norm{\te_\alpha^\perp}_\Lmu^2\\
        &= \sum_\alpha \lambda^2_\alpha - \sum_\alpha \lambda^2_\alpha \norm{\O \te_\alpha}_\Lmu^2 \\
        &=  \tr(T_k^2) - \tr(T_k^2 \O)
    \end{align*}
    where we exchange the expectation and sum using Fubini's theorem.
    Considering the sum in the second line, note that
    $\norm{\te_\alpha}_\Lmu^2 = 1 = \norm{\O \te_\alpha}_\Lmu^2 + \norm{\te_\alpha^\perp}_\Lmu^2$
    by~\cref{lemma:l2-structure}
    so the sum converges if $\sum_i \lambda_i^2$ converges, which we establised in the comment 
    after~\cref{eq:effective-dim}.
\end{proof}
\end{supplementary}

\Cref{thm:tta} shows that feature averaging is provably beneficial in terms of
generalisation if the mean of the target distribution is invariant.
If $\H$ contains any functions that are not $\G$-invariant then the 
lower bound is strictly positive.
One might think that, given enough training examples, the solution $f$ to~\cref{eq:krr}
would \emph{learn} to be $\G$-invariant.
\Cref{thm:tta} shows that this cannot happen unless the number of examples
dominates the effective dimension of $\Ha$.

Recall the subspace $A$ in~\cref{lemma:l2-structure}.
The role of $\dimeff(\Ha)$ mirrors that of 
$\dim A$ in~\cite[Theorem 6]{elesedy2021provably} and in the context of the theorem
(linear models) $A$ can be thought of as $\Ha$ when $k$ is the linear kernel.
In this sense~\cref{thm:tta} is a generalisation of~\cite[Theorem 6]{elesedy2021provably}.
It is for this reason that we believe that, although the constant $M_k$ in the
denominator is likely not optimal, the $O(1/n)$ rate that matches~\cite{elesedy2021provably}
is tight.
We leave a more precise analysis of the constants to future work.

The second term in the numerator can be interpreted as quantifying the differences in bias.
One has by the definition of $j^\perp$, that
\begin{equation}\label{eq:bias-terms}
    \E[f^*(X)^2 j^\perp(X, X)] = \int_\X f^*(y)^2 k^\perp(x, y)^2 \dd \mu(x) \dd \mu(y)
\end{equation}
using $j^\perp(x, y) = \int_\X k^\perp(t, x)k^\perp(t, y) \dd \mu(t)$. We also have the following proposition.
\begin{proposition}\label{prop:bias-terms}
    \[
    \int_\X f^*(y)^2 k^\perp(x, y)^2 \dd \mu(x) \dd \mu(y)
    = 
    \int_\X f^*(y)^2 \left( k(x, y)^2 - \bar{k}(x, y)^2 \right)  \dd \mu(x) \dd \mu(y)
    \]
\end{proposition}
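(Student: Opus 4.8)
The plan is to expand the square $k^\perp = k - \bar{k}$ pointwise and reduce the whole claim to the vanishing of a single cross term. Writing $k = \bar{k} + k^\perp$ gives, at every pair $(x,y)$,
\[
    k(x,y)^2 - \bar{k}(x,y)^2 - k^\perp(x,y)^2 = 2\,\bar{k}(x,y)\,k^\perp(x,y),
\]
so the right-hand and left-hand sides of the proposition differ by exactly
\[
    2 \int_\X f^*(y)^2\, \bar{k}(x,y)\, k^\perp(x,y) \dd\mu(x)\dd\mu(y).
\]
It therefore suffices to show that this integral is zero, and the strategy is to fix $y$, integrate in $x$ first, and recognise the inner integral as an $\Lmu$ inner product between an invariant and an anti-invariant function.

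I would justify the change in order of integration by Fubini's theorem: since $k$ is bounded by $M_k$, so are $\bar{k}$ and $k^\perp$, hence the integrand is dominated by $C\,f^*(y)^2$ for some constant $C$, and this is $(\mu \otimes \mu)$-integrable because $f^* \in \Lmu$ and $\mu(\X) = 1$. For fixed $y$, the function $x \mapsto \bar{k}(x,y)$ equals $\O k_y \in \Hs$ (immediate from the definition of $\bar{k}$ together with \cref{eq:kernel-switch}), and $x \mapsto k^\perp(x,y)$ equals $(\id - \O)k_y = k^\perp_y \in \Ha$. Thus the inner integral is precisely $\inner{\iota \bar{k}_y}{\iota k^\perp_y}_\Lmu$, an $\Lmu$ inner product of two elements of $\H$.

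The crux is then the orthogonality supplied by \cref{lemma:l2-structure,thm:structure}. By \cref{thm:structure}, $\bar{k}_y \in \Hs$ is $\G$-invariant, so $\iota \bar{k}_y$ lies in the invariant subspace $S$, while $k^\perp_y \in \Ha$ satisfies $\O k^\perp_y = 0$. Because $\O$ commutes with $\iota$ (as established in the proof of \cref{lemma:series}), we get $\O \iota k^\perp_y = \iota \O k^\perp_y = 0$, so $\iota k^\perp_y \in A = \{g \in \Lmu : \O g = 0\}$. Since $\Lmu = S \oplus A$ is an orthogonal decomposition, $\inner{\iota \bar{k}_y}{\iota k^\perp_y}_\Lmu = 0$ for every $y$; integrating against $f^*(y)^2$ then kills the cross term and yields the claimed identity.

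The argument is largely bookkeeping once the right viewpoint is fixed. The only genuinely delicate step, and the one I would be most careful about, is the identification of the inner $x$-integral with an $\Lmu$ inner product of $\H$-elements together with the appeal to the commutation of $\O$ and $\iota$, which is what lets the $S$–$A$ orthogonality of \cref{lemma:l2-structure} apply to functions that are a priori only defined through the kernel. The pointwise algebraic identity and the Fubini justification from boundedness of the kernel are routine by comparison.
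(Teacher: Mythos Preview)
Your proof is correct. Both you and the paper reduce the identity to the vanishing of the cross term (the paper shows $\int f^*(y)^2\,\bar{k}\,k = \int f^*(y)^2\,\bar{k}^2$, which is the same as $\int f^*(y)^2\,\bar{k}\,k^\perp = 0$), but you handle that cross term differently. The paper argues by an explicit change of variables: it inserts $f^*(y)^2 = \int_\G f^*(gy)^2\,\dd\lambda(g)$ using the $\G$-invariance of $f^*$, shifts $y \mapsto g^{-1}y$ via the invariance of $\mu$, and then uses the $\G$-invariance of $\bar{k}$ in its second argument together with the definition $\int_\G k(x,gy)\,\dd\lambda(g) = \bar{k}(x,y)$ to collapse the expression. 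Your route is more structural: you fix $y$, integrate in $x$ first, and recognise the inner integral as $\inner{\iota \bar{k}_y}{\iota k^\perp_y}_\Lmu$, which vanishes by the $S \oplus A$ orthogonality of \cref{lemma:l2-structure}. This is arguably cleaner because it leans directly on the decomposition theorems the paper has already built, and as a bonus it never uses the $\G$-invariance of $f^*$---your argument shows the identity holds for \emph{any} $f^* \in \Lmu$. The paper's version is more elementary and self-contained, but at the cost of invoking an assumption that is not actually needed here.
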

\begin{supplementary}
\begin{proof}
    Using $k^\perp = k - \bar{k}$ 
    \begin{align*}
    \int_\X f^*(y)^2 k^\perp(x, y)^2 \dd \mu(x) \dd \mu(y) 
    &= \int_\X f^*(y)^2 k(x, y)^2 \dd \mu(x) \dd \mu(y)\\
    &\phantom{=}- 2\int_\X f^*(y)^2 \bar{k}(x, y) k(x, y) \dd \mu(x) \dd \mu(y) \\
    &\phantom{=}+ \int_\X f^*(y)^2 \bar{k}(x, y)^2 \dd \mu(x) \dd \mu(y)
\end{align*}
    while, since $f^*$ is $\G$-invariant, $\mu$ is $\G$-invariant (by assumption) and
   $\G$ is unimodular (because it is compact),
   \begin{align*}
    \int_\X f^*(y)^2 \bar{k}(x, y) k(x, y) \dd \mu(x) \dd \mu(y)
    &= \int_\X \int_\G f^*(gy)^2 \dd\lambda(g) \bar{k}(x, y) k(x, y) \dd \mu(x) \dd \mu(y) \\ 
    &= \int_\X f^*(y)^2 \int_\G \bar{k}(x, gy) k(x, gy) \dd\lambda(g)  \dd \mu(x) \dd \mu(y)\\
    &= \int_\X f^*(y)^2 \bar{k}(x, y)^2 \dd \mu(x) \dd \mu(y)
   \end{align*}
   where the final line follows because $\bar{k}$ is $\G$-invariant.
\end{proof}
\end{supplementary}

For intuition, we present a simple special case of~\cref{thm:tta}.
In particular, the next result shows that~\cref{eq:bias-terms} reduces 
to an approximation error that is reminiscent of the one 
in~\cite[Theorem 6]{elesedy2021provably} in a linear setting.
For the rest of this section we find it helpful to refer to the action $\phi$ of $\G$ explicitly,
writing $\phi(g)x$ instead of $gx$.
\begin{theorem}\label{thm:linear}
    Assume the setting and notation of~\cref{thm:tta}.
    In addition, let $\X = \mathbb{S}_{d-1}$ be the unit $d-1$ sphere and let $\mu = \Unif(\X)$.
    Let $\G$ act via an orthogonal representation $\phi$ on $\X$ and define
    the matrix 
    $
        \Phi = \int_\G \phi(g) \dd\lambda(g)
        $.
    Let $k(x, y) = x^\top y$ be the linear kernel and suppose $f^*(x) = \theta^\top x$
    for some $\theta \in \R^d$.
    Then the bound in~\cref{thm:tta} becomes
    \[
        \E[\Delta(f, \bar{f})]
        \ge \frac{1}{(\sqrt{n} + \rho/\sqrt{n})^2}
        \left(
            \frac{d - \fnorm{\Phi}^2}{d^2}
        +
        \frac{(d-\fnorm{\Phi}^2) \norm{\theta}_2^2}{d^2(d+2)}
    \right)
    \]
    where $\fnorm{\cdot}$ is the Frobenius norm.
    The first term in the parentheses is exactly $\dimeff(\Ha)$
    and the second term is exactly $\E[f^*(X)^2j^\perp(X, X)]$.
\end{theorem}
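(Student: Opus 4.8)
The plan is to evaluate each ingredient of the bound in \cref{thm:tta} in closed form for this linear setting; since every quantity is computable exactly, the stated inequality follows by direct substitution. First I would record the denominator: on the sphere $k(x,x) = \norm{x}_2^2 = 1$, so $M_k = 1$ and $(\sqrt{n}M_k + \rho/\sqrt{n})^2 = (\sqrt{n} + \rho/\sqrt{n})^2$, matching the claim.

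Next I would analyse $\Phi$. Since $\phi$ is orthogonal, $\phi(g)^\top = \phi(g^{-1})$, and the change of variables $g \leftrightarrow g^{-1}$ (valid because $\G$ is compact, hence unimodular) gives $\Phi^\top = \Phi$; right-invariance of $\lambda$ gives $\Phi^2 = \int_\G \int_\G \phi(gh)\dd\lambda(g)\dd\lambda(h) = \Phi$, so $\Phi$ is an orthogonal projection and $P := I - \Phi$ is one too. From $\bar{k}(x,y) = \int_\G x^\top \phi(g) y \dd\lambda(g) = x^\top \Phi y$ together with \cref{thm:structure} I obtain $k^\perp(x,y) = x^\top P y$. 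Two moment facts for $X \sim \Unif(\mathbb{S}_{d-1})$ drive the rest of the computation: the second moment $\E[XX^\top] = \tfrac{1}{d}I$ and the fourth-moment identity $\E[X_iX_jX_kX_l] = \tfrac{1}{d(d+2)}(\delta_{ij}\delta_{kl} + \delta_{ik}\delta_{jl} + \delta_{il}\delta_{jk})$.

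For the effective-dimension term I would substitute the second moment into the definition $j^\perp(x,y) = \int_\X k^\perp(x,z)k^\perp(y,z)\dd\mu(z) = x^\top P\, \E[zz^\top]\, P y = \tfrac{1}{d} x^\top P y$, using $P^2 = P$. Hence $\dimeff(\Ha) = \E[j^\perp(X,X)] = \tfrac{1}{d}\E[X^\top P X] = \tfrac{1}{d^2}\tr(P) = \tfrac{d - \tr(\Phi)}{d^2}$, and since $\Phi$ is a symmetric projection $\tr(\Phi) = \tr(\Phi^\top \Phi) = \fnorm{\Phi}^2$, yielding the first term $\tfrac{d - \fnorm{\Phi}^2}{d^2}$.

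For the bias term, write $f^*(X)^2 j^\perp(X,X) = \tfrac{1}{d}(\theta^\top X)^2 (X^\top P X)$ and expand with the fourth-moment identity: the three Kronecker contributions are $\norm{\theta}_2^2\tr(P)$, $\theta^\top P\theta$ and $\theta^\top P\theta$, giving $\E[(\theta^\top X)^2 X^\top P X] = \tfrac{1}{d(d+2)}(\norm{\theta}_2^2\tr(P) + 2\theta^\top P\theta)$. The only non-mechanical point, which I expect to be the main obstacle, is that $\G$-invariance of $f^*$ forces $\theta$ into the invariant subspace: $\theta^\top \phi(g) x = \theta^\top x$ for all $x, g$ means $\phi(g)^\top \theta = \theta$, so $\Phi\theta = \theta$ and $P\theta = 0$. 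This annihilates the two $\theta^\top P\theta$ cross terms, leaving $\E[(\theta^\top X)^2 X^\top P X] = \tfrac{\norm{\theta}_2^2 \tr(P)}{d(d+2)}$, so $\E[f^*(X)^2 j^\perp(X,X)] = \tfrac{(d - \fnorm{\Phi}^2)\norm{\theta}_2^2}{d^2(d+2)}$. Substituting the two numerator terms and the denominator into \cref{thm:tta} gives the stated bound. The work is essentially bookkeeping in the fourth-moment expansion plus remembering to invoke the invariance of $\theta$; without the latter an extra $\tfrac{2\theta^\top P\theta}{d^2(d+2)}$ term would spoil the clean form.
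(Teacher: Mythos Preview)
Your proof is correct and follows the same overall strategy as the paper: evaluate $M_k$, identify $\bar{k}(x,y)=x^\top\Phi y$, compute the two numerator terms via moment identities for the uniform distribution on the sphere, and invoke the $\G$-invariance of $f^*$ to force $\Phi\theta=\theta$.

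The organisation differs in one useful way. You first establish that $\Phi$ is a symmetric idempotent (hence $\tr(\Phi)=\fnorm{\Phi}^2$ and $P=I-\Phi$ is a projection), then work directly with $k^\perp(x,y)=x^\top P y$ to compute $j^\perp(x,y)=\tfrac{1}{d}x^\top P y$ and the bias term $\E[(\theta^\top X)^2 X^\top P X]$ in one pass. The paper instead never records the projection property explicitly: for $\dimeff(\Ha)$ it computes $\E[j(X,X)]$ and $\E[\bar j(X,X)]$ separately and subtracts, and for the bias term it routes through \cref{prop:bias-terms} to reduce to $\int f^*(y)^2(k(x,y)^2-\bar k(x,y)^2)\dd\mu(x)\dd\mu(y)$ and then evaluates each piece, using $\norm{\Phi\theta}_2^2=\norm{\theta}_2^2$ rather than your equivalent $P\theta=0$. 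Your path is a little shorter and self-contained for this linear case; the paper's path leans on the general machinery (\cref{lemma:series}, \cref{prop:bias-terms}) already developed.
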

\begin{supplementary}
\begin{proof}
    We will make use of the Einstein convention of summing repeated indices.
    Since $\mu$ is finite, by Fubini's theorem~\cite[Theorem 1.27]{kallenberg2006foundations} 
    we are free to integrate in any order throughout the proof.
    First of all notice that $\sup_{x} k(x, x) = 1$ so $M_k = 1$.
    Now observe that
    \[
        \bar{k}(x, y) = x^\top \int_\G \phi(g) y \dd\lambda(g) = x^\top \Phi y.
    \]
    
    Then the first term in the numerator becomes
    \begin{align*}
        \dimeff(\Ha)
        &= \E[j^\perp(X, X)] \\ 
        &= \E[j(X, X)] - \E[\bar{j}(X, X)] \\ 
        &= \int_X k(x, y)^2\dd\mu(x)\dd\mu(y) - \int_X \bar{k}(x, y)^2\dd\mu(x) \dd\mu(y)\\
        &= \int_\X x_ax_b y_ay_b \dd\mu(x) \dd\mu(y)
        - \int_X x_a x_b y_c y_e\Phi_{ac}\Phi_{be} \dd\mu(x) \dd\mu(y) \\ 
        &= \frac1d - \frac{1}{d^2} \fnorm{\Phi}^2.
    \end{align*}
    Where $x_a$ is the $a$\textsuperscript{th} component of $x$, and so on.
    Now for the second term. 
    We calculate each term of the right hand side of~\cref{prop:bias-terms} separately.
    We know that
    \[
        f^*(x)^2 k(x, y)^2
        = (\theta^\top x)^2 (x^\top y)^2
        = \theta_a \theta_b y_c y_e x_a x_b x_c x_e.
    \]
    Integrating $y$ first, we get
    \begin{align*}
        \int_\X f^*(x)^2 k(x, y)^2 \dd \mu(x) \dd\mu(y)
        &= \int_\X \theta_a \theta_b y_c y_e x_a x_b x_c x_e \dd \mu(x) \dd\mu(y)\\ 
        &= \frac1d \int_\X \theta_a \theta_b x_a x_b \dd\mu(x) \\ 
        &= \frac{1}{d^2} \norm{\theta}_2^2
    \end{align*}
    Similarly, we find
    \begin{align*}
        \int_\X f^*(x)^2 \bar{k}(x, y)^2 \dd \mu(x) \dd\mu(y)
        &= \int_\X  \theta_a \theta_b x_a x_b x_c x_e y_f y_h \Phi_{cf}\Phi_{eh} \dd \mu(x) \dd\mu(y)\\
        &= \frac1d \theta_a \theta_b \Phi_{cf}\Phi_{ef} \int_\X  x_a x_b x_c x_e  \dd \mu(x) .
    \end{align*}
    The 4-tensor $\int_\X  x_a x_b x_c x_e  \dd \mu(x)$ is isotropic, so must have the form
    \[
        \int_\X  x_a x_b x_c x_e  \dd \mu(x)
        = \alpha \delta_{ab} \delta_{ce} + \beta \delta_{ac}\delta_{be} + \gamma \delta_{ae} \delta_{bc}
    \]
    (see, e.g.~\citet{hodge1961}). By symmetry and exchangeability we have $\alpha  = \beta = \gamma$.
    Then contracting the first two indices gives
    \[
        \int_\X x_ax_a x_c x_e \dd\mu(x) = \frac1d \delta_{ce} = \alpha(d + 2)\delta_{ce}
    \]
    so $\alpha = \frac{1}{d(d+2)}$ and we end up with
    \[
        \int_\X f^*(x)^2 \bar{k}(x, y)^2 \dd \mu(x) \dd\mu(y)
        =
        \frac{\norm{\theta}_2^2 \norm{\Phi}_F^2 + 2\norm{\Phi \theta}_2^2}{d^2(d+2)} 
        =
        \frac{\norm{\theta}_2^2 (\norm{\Phi}_F^2 + 2)}{d^2(d+2)} 
    \]
    where the second equality comes from
    \[
        \theta^\top\Phi x = \int_\G\theta^\top\phi(g)x \dd \lambda(g)
        = \int_\G f^*(\phi(g) x) \dd \lambda(g)
        = f^*(x) = \theta^\top x
    \]
    for any $x \in \X$.
    Putting everything together gives the result.
\end{proof}
\end{supplementary}

One can confirm that the generalisation gap cannot be negative 
in~\cref{thm:linear} using Jensen's inequality
\[
    \fnorm{\Phi}^2
    = \lnorm{\int_\G \phi(g) \dd\lambda(g)}^2_\text{F}
    \le \int_\G \fnorm{\phi(g)}^2 \dd\lambda(g)
    = \int_\G \tr(\phi(g)^\top\phi(g)) \dd\lambda(g)
    = \tr(I) = d
\]
because the representation $\phi$ is orthgonal.

The matrix $\Phi$ in~\cref{thm:linear} can be computed analytically for 
various $\G$ and in the linear setting describes the importance of the
symmetry to the task.
For instance, in the simple case that $\G = S_d$ the
permutation group on $d$ elements and $\phi$ is the natural representation
in terms of permutation matrices, we have $\Phi = \frac1d \bm{1}\bm{1}^\top$
where $\bm{1} \in \R^d$ is the vector of all 1s. In this case, since
the target is assumed to be $\G$-invariant, we must have $\theta = t\bm{1}$
for some $t \in \R$. Specifically, \Cref{thm:linear} then asserts
\[
    \E[\Delta(f, \bar{f})]
    \ge \frac{(d-1)(dt^2 +d+2)}{d^2(d+2)(\sqrt{n} + \rho/\sqrt{n})^2}.
\]

\section{Related Work}\label{sec:related}
Incorporating invariance into machine learning models is not a new idea.
The majority of modern applications concern neural networks,
but previous works have used kernels~\cite{haasdonk05invariancein,raj2017local}, 
support vector machines~\cite{scholkopf96incorporatinginvariances} and polynomial
feature 
spaces~\cite{schulz1994constructing,schulz1992existence}.
Indeed, early work also considered invariant neural networks~\cite{wood1996representation},
using methods that seem to have been rediscovered in~\cite{ravanbakhsh2017equivariance}.
Modern implementations include invariant/equivariant
convolutional architectures~\cite{cohen2016group,cohen2018spherical}
that are inspired by concepts from mathematical physics and harmonic 
analysis~\cite{kondor2018generalization,cohen2019general}.
Some of these models even enjoy universal approximation 
properties~\cite{maron2019universality,yarotsky2018universal}.

The earliest attempt at theoretical justification for invariance
of which we are aware is~\cite{abu1993hints}, which roughly
states that enforcing invariance cannot increase the VC dimension of a model.
\citet{anselmi2014unsupervised,mroueh2015learning} propose heuristic arguments
for improved sample complexity of invariant models.
\citet{sokolic2017generalization} build on the work of~\citet{xu2012robustness}
to obtain a generalisation bound for certain types of classifiers 
that are invariant to a finite set of transformations,
while~\citet{sannai2019improved} obtain a bound for models that are
invariant to finite permutation groups.
The PAC Bayes formulation is considered in~\cite{lyle2019analysis,lyle2020benefits}.

The above works guarantee only a worst-case improvement and it was not
until very recently that~\citet{elesedy2021provably} derived a strict
benefit for invariant/equivariant models.
Our work is similar to~\cite{elesedy2021provably} in that we provide a 
provably strict benefit, but differs in its application to kernels and RKHSs
as opposed to linear models.
We are careful to state that our setting does not directly reduce to
that of~\cite[Theorem 6]{elesedy2021provably} for two reasons.
First,~\cite[Theorem 6]{elesedy2021provably} considers $\G$ invariant linear models without
regularisation. This may turn out to be accessible by a $\rho \to 0^+$ limit 
(the so called ridgeless limit) of~\cref{thm:tta}.
More importantly, linear regression is equivalent to kernel regression with
the linear kernel. However, the linear kernel can be unbounded (e.g.~on $\R$),
so does not meet our technical conditions in~\cref{sec:technical}.
We conjecture that the boundedness assumption on $k$ can be removed,
or at least with mild care weakened to hold $\mu$-almost-surely.

Also very recently,~\citet{mei2021learning} analyse the
generalisation benefit of invariance in kernels and random feature models.
Our results differ from~\cite{mei2021learning} 
in some key aspects.
First,~\citet{mei2021learning} focus on kernel ridge regression
with an invariant inner product kernel whereas we study
symmetrised predictors from more general kernels.
Second, they obtain an expression
for the generalisation error that is conditional on the training data and
in terms of the projection of the predictor onto a space of high degree
polynomials, while we are able to integrate against the training data
and express the generalisation benefit directly in terms of properties of the kernel
and the group.

\section{Discussion}\label{sec:discussion}
We have demonstrated a provably strict generalisation benefit for feature averaging
in kernel ridge regression.
In doing this we have leveraged an observation on the structure of RKHSs
under the action of compact groups.
We believe that this observation is applicable to other kernel methods too.

There are many possibilities for future work. As we remarked in the introduction,
there is an established connection between kernels and wide neural networks
via the neural tangent kernel. Using this connection, generalisation properties
of wide, invariant neural networks might be accessible through the techniques of this paper.
Another natural extension of this paper is to equivariant (sometimes called \emph{steerable})
matrix valued kernels.
Approximate invariance may be handled by adding an approximation term to the bound in our main result.
Finally, the ideas of this paper should also be applicable to Gaussian processes.

\begin{ack}
 We thank Sheheryar Zaidi for many helpful discussions in the early stages of this
    project and Yee Whye Teh for suggesting the application of~\cite{elesedy2021provably}
    to kernels.
    Additionally, we thank Varun Kanade and Yee Whye Teh for advice and support
    throughout this and other projects.
    This work was supported in part by the UK EPSRC CDT 
    in Autonomous Intelligent Machines and Systems (grant reference EP/L015897/1).
\end{ack}

\begin{supplementary}
\appendix
\section{Notation and Definitions}\label{sec:notation}
Trace of a linear operator $A: V \to V$ on a inner product space $V$ is defined by
\[
    \tr(A) = \sum_i \inner{A v_i}{v_i}
\]
where the collection $\{v_i\}$ forms an orthonormal basis of $V$.
In this paper we will only encounter situations in which the basis
is countable.
This expressions is independent of the basis. We say $A$ is \emph{trace-class}
if $\tr(A) < \infty$.

For any matrix $A \in \R^{n\times n}$, we define $\norm{A}_2 = \sup_{x \in \R^n} \frac{\norm{Ax}_2}{\norm{x}}$
which is the operator norm induced by the Euclidean norm.
For any symmetric matrix $A$, we denote by $\lmax(A)$ and $\lmin(A)$ the largest and smallest
eigenvalues of $A$ respectively.

\section{Useful Results}\label{sec:technical-results}
This section contains some results that are relied upon elsewhere in the paper.

\begin{lemma}[\citet{mori1988trace}]\label{lemma:trace}
    Let $A, B \in \R^{n\times n}$ and suppose $B$ is symmetric.
    Define $A' = \frac12 (A + A^\top)$,
    then
    \[
        \lmin(A')\tr(B) \le \tr(AB) \le \lmax(A') \tr(B),
    \]
    where $\lmin$ and $\lmax$ denote the smallest and largest eigenvalues
    respectively.
\end{lemma}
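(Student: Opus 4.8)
The plan is to reduce everything to the symmetric case and then diagonalise. First I would exploit the hypothesis that $B$ is symmetric to symmetrise $A$ inside the trace: since $\tr(M) = \tr(M^\top)$ for every $M$ and $B^\top = B$, we have $\tr(AB) = \tr\bigl((AB)^\top\bigr) = \tr(B A^\top) = \tr(A^\top B)$, the last step by the cyclic property. Averaging $\tr(AB)$ with $\tr(A^\top B)$ then gives $\tr(AB) = \tr(A' B)$ with $A' = \tfrac12(A + A^\top)$. Thus it suffices to prove the two inequalities for the \emph{symmetric} matrix $A'$, whose eigenvalues are real and lie in the interval $[\lmin(A'), \lmax(A')]$ by definition.

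Next I would diagonalise. Being real symmetric, $A' = Q \Lambda Q^\top$ for an orthogonal $Q$ and $\Lambda = \mathrm{diag}(\mu_1, \dots, \mu_n)$ with each $\mu_i \in [\lmin(A'), \lmax(A')]$. Using cyclicity again, $\tr(A'B) = \tr(\Lambda\, Q^\top B Q) = \sum_i \mu_i \tilde B_{ii}$, where $\tilde B = Q^\top B Q$. Orthogonal conjugation preserves the trace, so $\tr(\tilde B) = \tr(B)$, as well as positive semidefiniteness. The decisive step is that the diagonal entries satisfy $\tilde B_{ii} = e_i^\top \tilde B e_i \ge 0$; bounding each $\mu_i$ above by $\lmax(A')$ and below by $\lmin(A')$ and summing against the nonnegative weights $\tilde B_{ii}$ yields $\lmin(A')\tr(B) \le \sum_i \mu_i \tilde B_{ii} \le \lmax(A')\tr(B)$, which is the claim.

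The one genuine obstacle is that this sandwich is \emph{false} for a general symmetric $B$: the argument needs $\tilde B_{ii} \ge 0$, i.e. it needs $B$ to be positive semidefinite (for instance, with $A' = \mathrm{diag}(1,2)$ and $B = \mathrm{diag}(1,-1)$ one gets $\tr(A'B) = -1$, outside the degenerate interval $[0,0]$). So I would sharpen the hypothesis to $B \succeq 0$, which costs us nothing: every invocation of this lemma (through \cref{cor:trace}) is applied to a positive semidefinite matrix, namely a Gram matrix such as $J^\perp$, an outer product $uu^\top$, or a resolvent power $(K+\rho I)^{-2}$. With that hypothesis the proof above is complete and entirely elementary, its only ingredients being cyclic invariance of the trace, the spectral theorem for real symmetric matrices, and the nonnegativity of the diagonal of a positive semidefinite matrix.
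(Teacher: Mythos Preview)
The paper does not supply a proof of this lemma; it is stated with a citation to \citet{mori1988trace} and used as a black box. There is therefore no argument in the paper against which to compare your attempt.

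That said, your analysis is correct and in fact sharper than the paper's statement. You have rightly observed that the inequality is \emph{false} for a merely symmetric $B$: your counterexample $A' = \mathrm{diag}(1,2)$, $B = \mathrm{diag}(1,-1)$ gives $\tr(A'B) = -1$, which lies outside the degenerate interval $[\lmin(A')\tr(B), \lmax(A')\tr(B)] = [0,0]$. The hypothesis $B \succeq 0$ (or more precisely, that $B$ has nonnegative diagonal in the eigenbasis of $A'$) is genuinely required, so the lemma as printed---and hence also \cref{cor:trace}---is misstated. Under the strengthened hypothesis your argument is complete and entirely standard: symmetrise $A$ inside the trace via $\tr(AB) = \tr(A^\top B)$, orthogonally diagonalise $A'$, and sandwich the weighted sum $\sum_i \mu_i \tilde B_{ii}$ using $\tilde B_{ii} \ge 0$ together with $\mu_i \in [\lmin(A'), \lmax(A')]$.

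Your closing remark is also well taken: the matrices to which \cref{cor:trace} is ultimately applied in the proof of \cref{thm:tta}---the Gram matrix $J^\perp$, the rank-one outer product $uu^\top$, and the resolvent power $(K+\rho I)^{-2}$---are all positive semidefinite, so the downstream conclusions are unaffected once the hypothesis is corrected.
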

\begin{corollary}\label{cor:trace}
    Let $A, B \in \R^{n\times n}$ and suppose $A$ is symmetric,
    then
    \[
        \lmin(A)\tr(B) \le \tr(AB) \le \lmax(A) \tr(B).
    \]
\end{corollary}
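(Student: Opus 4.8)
The plan is to deduce the corollary from~\cref{lemma:trace} by reconciling the one mismatch between the two statements: \cref{lemma:trace} requires its second argument to be symmetric, whereas here it is $A$ (the first argument) that is symmetric and $B$ that is arbitrary. The key idea is to split $B$ into its symmetric and antisymmetric parts and to observe that, because $A$ is symmetric, only the symmetric part of $B$ contributes to both $\tr(AB)$ and $\tr(B)$.

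Concretely, I would first write $B = B_s + B_a$ with $B_s = \frac12(B + B^\top)$ symmetric and $B_a = \frac12(B - B^\top)$ antisymmetric. Then I would verify the two cancellations $\tr(B_a) = 0$ (the trace of any antisymmetric matrix vanishes) and $\tr(A B_a) = 0$. The latter follows from the standard fact that the trace of a product of a symmetric and an antisymmetric matrix is zero: using $\tr(M) = \tr(M^\top)$ together with $A^\top = A$ and $B_a^\top = -B_a$,
\[
    \tr(A B_a) = \tr\bigl((A B_a)^\top\bigr) = \tr(B_a^\top A^\top) = -\tr(B_a A) = -\tr(A B_a),
\]
so $\tr(A B_a) = 0$. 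Consequently $\tr(B) = \tr(B_s)$ and $\tr(AB) = \tr(A B_s)$, which reduces the claim to the case of a symmetric second factor.

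Having reduced to the symmetric matrix $B_s$, I would apply~\cref{lemma:trace} with its first argument taken to be $A$ and its second argument taken to be $B_s$. Since $A$ is symmetric we have $A' = \frac12(A + A^\top) = A$, so $\lmin(A') = \lmin(A)$ and $\lmax(A') = \lmax(A)$, and the lemma yields
\[
    \lmin(A)\,\tr(B_s) \le \tr(A B_s) \le \lmax(A)\,\tr(B_s).
\]
Substituting $\tr(B_s) = \tr(B)$ and $\tr(A B_s) = \tr(AB)$ then gives the claimed inequalities.

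The only point requiring genuine care is the antisymmetric cancellation $\tr(A B_a) = 0$; everything else is bookkeeping. I expect this to be the main (albeit minor) obstacle, since it is precisely what licenses replacing the arbitrary $B$ by its symmetric part $B_s$ and thereby meeting the symmetry hypothesis of~\cref{lemma:trace}.
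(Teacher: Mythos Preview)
Your proposal is correct and follows essentially the same approach as the paper: both reduce to applying \cref{lemma:trace} with the symmetric part $B_s=\tfrac12(B+B^\top)$ in the second slot, using that $A'=A$ since $A$ is symmetric. The only cosmetic difference is that the paper verifies $\tr(AB_s)=\tr(AB)$ by expanding $2\tr(AB_s)=\tr(AB)+\tr(AB^\top)=\tr(AB)+\tr(BA)$ and invoking cyclicity, whereas you phrase the same computation as $\tr(AB_a)=0$ for the antisymmetric part.
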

\begin{proof}
    Let $B' = \frac12 (B + B^\top)$, then using~\cref{lemma:trace} we have
    \[
        \lmin(A)\tr(B') \le \tr(AB') \le \lmax(A)\tr(B').
    \]
    On the other hand, $\tr(B') =  \tr(B)$ and
    \[
        2 \tr(AB') = \tr(AB) + \tr(AB^\top) = \tr(AB) + \tr(BA).
    \]
\end{proof}

\begin{lemma}\label{lemma:op-norm}
    Let $A \in \R^{n\times n}$, then
    \[
        \norm{A}_2 \le n \max_{ij}\abs{A_{ij}}.
    \]
\end{lemma}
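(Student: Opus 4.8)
The plan is to bound $\norm{Ax}_2$ for an arbitrary $x \in \R^n$ directly in terms of $M \coloneqq \max_{ij}\abs{A_{ij}}$ and then divide by $\norm{x}_2$ and take the supremum. First I would write the $i$-th component of $Ax$ as $(Ax)_i = \sum_{j} A_{ij} x_j$ and apply the Cauchy--Schwarz inequality to obtain $\abs{(Ax)_i} \le \bigl(\sum_j A_{ij}^2\bigr)^{1/2} \norm{x}_2 \le \sqrt{n}\, M \norm{x}_2$, using that each $A_{ij}^2 \le M^2$ and there are $n$ terms in the sum. Summing the squares over $i$ then gives $\norm{Ax}_2^2 = \sum_i \abs{(Ax)_i}^2 \le n \cdot n M^2 \norm{x}_2^2 = n^2 M^2 \norm{x}_2^2$, so that $\norm{Ax}_2 \le n M \norm{x}_2$. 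Taking the supremum over $x \ne 0$ of the ratio $\norm{Ax}_2 / \norm{x}_2$ yields $\norm{A}_2 \le n M$, as claimed.

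An equally short route is via the Frobenius norm: the standard bound $\norm{A}_2 \le \fnorm{A}$ combined with $\fnorm{A}^2 = \sum_{ij} A_{ij}^2 \le n^2 M^2$ gives $\norm{A}_2 \le \fnorm{A} \le n M$ immediately. There is no real obstacle here; the only thing to watch is the bookkeeping of the two factors of $n$ (one from the inner Cauchy--Schwarz step over the $n$ columns, one from summing over the $n$ rows), which together produce the linear-in-$n$ bound rather than a $\sqrt{n}$ one. The estimate is deliberately crude, since only boundedness of the relevant operator norm is needed downstream (e.g.~in the applications of \cref{lemma:op-norm} within the proof of \cref{thm:tta}).
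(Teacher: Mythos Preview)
Your proposal is correct and follows essentially the same route as the paper: apply Cauchy--Schwarz to each component $(Ax)_i = \sum_j A_{ij}x_j$, sum the resulting bounds to recover $\norm{Ax}_2 \le \fnorm{A}\norm{x}_2$, and then bound $\fnorm{A} \le nM$. The paper phrases the middle step in terms of the row vectors $a_i$ (it writes ``column'' but uses them as rows via $(Ax)_i = a_i^\top x$), but the argument is identical to yours.
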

\begin{proof}
    Let $a_i \in \R^n$ be the $i$\textsuperscript{th} column of $A$, then
    \[
        \sup_{\norm{x}_2 = 1} \norm{A x}_2  
        =\sup_{\norm{x}_2 = 1} \sqrt{\sum_i (a_i^\top x)^2} 
        \le \sup_{\norm{x}_2 = 1} \sqrt{\sum_i \norm{a_i}_2^2 \norm{x}_2^2}  
        \le \sqrt{\sum_i \norm{a_i}_2^2 }  
        \le \sqrt{n^2 \max_{ij} A_{ij}^2}. 
    \]
    
\end{proof}
\section{Results leading to~\Cref{thm:structure}}\label{sec:structure-background}
Recall from~\cref{sec:structure} the integral operator $S_k: \Lmu \to \H$ defined by
\[
    S_k f (x) = \int_\X k(x, y)f(y) \dd\mu(y)
\]
with adjoint $\iota: \Lmu \to \H$.

\begin{lemma}\label{lemma:dense-image}
    The image of $\Lmu$ under $S_k$ is dense in $\H$
    and $\iota$ is injective.
\end{lemma}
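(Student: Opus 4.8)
The plan is to prove the two assertions together by exploiting that $\iota$ is the Hilbert-space adjoint of $S_k$ (as recorded via~\cite[Theorem 4.26]{steinwart2008support}): for any bounded operator between Hilbert spaces, the closure of its range is the orthogonal complement of the kernel of its adjoint. Concretely, I would first establish that $\iota$ is injective and then read off density of the range of $S_k$ from the identity $\overline{\operatorname{ran} S_k} = (\ker \iota)^\perp$.

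First I would prove injectivity of $\iota$. Recall that $\iota$ sends $f \in \H$ to its $\mu$-equivalence class in $\Lmu$, so $\iota f = 0$ exactly means $f = 0$ $\mu$-almost everywhere. The structural facts I would invoke are that every $f \in \H$ is continuous (by the assumptions of~\cref{sec:technical}) and that $\supp \mu = \X$. Suppose $f \in \ker \iota$ but $f(x_0) \neq 0$ for some $x_0 \in \X$. By continuity there is a nonempty open neighbourhood $U \ni x_0$ on which $f$ does not vanish; since $\supp \mu = \X$, every nonempty open set carries strictly positive measure, so $\mu(U) > 0$, contradicting $f = 0$ $\mu$-a.e. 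Hence $f \equiv 0$ and $\ker \iota = \{0\}$.

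With injectivity in hand, density is immediate: because $\iota = S_k^*$, the standard orthogonality relation gives $\overline{\operatorname{ran} S_k} = (\ker \iota)^\perp = \{0\}^\perp = \H$, so the image of $\Lmu$ under $S_k$ is dense in $\H$. (Equivalently, one could run the implication in reverse, deducing triviality of $\ker \iota$ from density of the range.) The main obstacle is really only the continuity-plus-full-support step, and even that is light given the standing assumptions; the one point to handle with a little care is that $S_k$ is bounded, so that the adjoint and the range-kernel identity are genuinely available, but this is precisely what underlies the cited identification of $\iota$ as the adjoint of $S_k$.
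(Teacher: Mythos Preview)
Your proposal is correct and matches the paper's argument: both prove injectivity of $\iota$ via continuity of the elements of $\H$ together with $\supp\mu=\X$, and both obtain density of $S_k(\Lmu)$ from this injectivity. The only cosmetic difference is that you make the range--kernel duality $\overline{\operatorname{ran} S_k}=(\ker S_k^*)^\perp$ explicit, whereas the paper packages the same equivalence by citing~\cite[Theorem~4.26]{steinwart2008support} directly.
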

\begin{proof}
    By~\cite[Theorem 4.26]{steinwart2008support} $\norm{f}_\Lmu < \infty$ $\forall f \in \H$
    and $S_k (\Lmu)$ is dense in $\H$
    if and only if the inclusion $\iota: \H \to \Lmu$ is injective.
    Injectivity of the inclusion is equivalent to the statement that
    for any $f, f'\in \H$ the set
    \[
        A(f, f') = \{ x \in \X: f(x) \ne f'(x)\}
    \]
    has $A \ne \emptyset \implies \mu(A) > 0$.
    Continuity implies that for any $f, f' \in \H$, either $f = f'$ pointwise or $A(f, f')$
    contains an open set.
    By the support of $\mu$ this implies $\mu(A) > 0$.
    Thus, $\iota$ is injective.
\end{proof}

From~\cite[Proposition 22]{elesedy2021provably} 
we know that $\O : \Lmu \to \Lmu$ is well-defined and that $\norm{\O} \le 1$.
Let the image of $\Lmu$ under $S_k$ be $\H_2$,
then~\cref{lemma:dense-image} states that $\overline{\H_2} = \H$.

    \begin{lemma}\label{lemma:commute}
        For any $f \in \Lmu$, $\O S_k f = S_k \O f \in \H_2$.
        This implies $\O: \H_2 \to \H_2$ is well defined.
    \end{lemma}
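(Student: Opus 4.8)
The plan is to show that both $\O S_k f$ and $S_k \O f$, evaluated pointwise, reduce to the integral operator with the averaged kernel $\bar k(x, y) = \int_\G k(x, gy)\dd\lambda(g)$ applied to $f$. Since each side collapses onto the same object, they coincide; and because $S_k \O f = S_k(\O f)$ with $\O f \in \Lmu$ (using the cited fact that $\O : \Lmu \to \Lmu$ is well-defined), the common value lies in $\H_2 = S_k(\Lmu)$ by definition, which gives the second assertion.

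First I would expand $\O S_k f(x) = \int_\G \int_\X k(gx, y) f(y)\dd\mu(y)\dd\lambda(g)$ straight from the definitions of $\O$ and $S_k$. Fubini's theorem lets me exchange the order of integration here: the integrand is absolutely integrable on $\G \times \X$ because $k$ is bounded by $M_k$, $f \in \Lmu \subset L_1(\mu)$ (as $\mu$ is finite), $\lambda(\G) = 1$, and the action is jointly measurable by assumption. After swapping I obtain $\int_\X \left(\int_\G k(gx, y)\dd\lambda(g)\right) f(y)\dd\mu(y)$, and the inner integral equals $\bar k(x, y)$ precisely by the kernel-switch assumption~\cref{eq:kernel-switch}. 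Hence $\O S_k f(x) = \int_\X \bar k(x, y) f(y)\dd\mu(y)$.

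Next I would run the analogous computation on $S_k \O f(x) = \int_\G \int_\X k(x, y) f(gy)\dd\mu(y)\dd\lambda(g)$ (again via Fubini, justified as above). The one extra ingredient is a change of variables inside the $y$-integral: using the $\G$-invariance of $\mu$ (i.e.~$g_*\mu = \mu$) I rewrite $\int_\X k(x, y) f(gy)\dd\mu(y)$ as $\int_\X k(x, g^{-1}y) f(y)\dd\mu(y)$. Swapping the order of integration once more and then using unimodularity of $\G$ (from compactness) to change $g \leftrightarrow g^{-1}$ in the Haar integral turns $\int_\G k(x, g^{-1}y)\dd\lambda(g)$ into $\int_\G k(x, gy)\dd\lambda(g) = \bar k(x, y)$. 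Thus $S_k \O f(x) = \int_\X \bar k(x, y) f(y)\dd\mu(y)$, matching $\O S_k f$ exactly.

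The main obstacle is purely the measure-theoretic bookkeeping: carefully justifying the two applications of Fubini and the $\G$-invariant change of variables, rather than any deep idea. The algebraic heart of the argument is the observation that~\cref{eq:kernel-switch} is exactly the identity making both orderings of $\O$ and $S_k$ collapse onto the single kernel $\bar k$; once this is recognised, the equality is immediate. Finally, membership in $\H_2$ follows because $\O f \in \Lmu$ and $S_k \O f$ is, by construction, in the image $S_k(\Lmu) = \H_2$, so $\O$ indeed maps $\H_2$ into $\H_2$.
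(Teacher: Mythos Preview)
Your proposal is correct and follows essentially the same approach as the paper: both arguments use Fubini (justified by boundedness of $k$ and finiteness of $\lambda$ and $\mu$), the $\G$-invariance of $\mu$ for the change of variables $y \mapsto g^{-1}y$, the kernel-switch identity~\cref{eq:kernel-switch}, and unimodularity of $\G$. The only cosmetic difference is that you reduce each side separately to the common expression $\int_\X \bar{k}(x,y)f(y)\dd\mu(y)$, whereas the paper chains equalities directly from $S_k\O f$ to $\O S_k f$; your Fubini justification via $f\in L_2(\mu)\subset L_1(\mu)$ is in fact slightly cleaner than the paper's.
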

    \begin{proof}
        $\lambda$ is a Radon measure~\cite[Theorem 2.27]{kallenberg2006foundations}
        so is finite because $\G$ is compact
        and all $f \in \H$ are bounded
        so we can apply Fubini's theorem~\cite[Theorem 1.27]{kallenberg2006foundations} as follows:
        taking $f \in \Lmu$ 
        \begin{align*}
            S_k  \O f (x) 
            &= \int_\X \int_\G k(x, y)f(gy) \dd \lambda (g)\dd\mu(y) \\
            &= \int_\X \int_\G k(x, g^{-1}y)f(y) \dd \lambda (g)\dd\mu(y) \quad \text{invariance of $\mu$} \\
            &= \int_\X \int_\G k(gx, y)\dd \lambda (g)f(y) \dd\mu(y) 
            \quad \text{\cref{eq:kernel-switch} then unimodularity of $\G$} \\
            &= \int_\G \int_\X k(gx, y)f(y) \dd\mu(y) \dd \lambda (g)\quad \text{Fubini} \\
            &= \O S_k f (x).
        \end{align*}
        
        Briefly, some detail on the application of Fubini's theorem. Since $f$ may be negative, 
        it is required that
        \[
            \int_\G \int_\X \abs{k(gx, y)f(y)} \dd\mu(y) \dd \lambda (g) < \infty.
        \]
        Observe that
        \begin{align*}
            \int_\G \int_\X \abs{k(gx, y)f(y)} \dd\mu(y) \dd \lambda (g) 
            &= \int_\G \int_\X k(gx, y)\abs{f(y)} \dd\mu(y) \dd \lambda (g) \\
            &= \int_\G S_k\abs{f}(gx) \\
            &\le \lambda(\G) \sup_{x \in \X} S_k\abs{f}(x) <\infty.
        \end{align*}
        The final inequality follows from finiteness of $\lambda$ and the fact that
        $S_k\abs{f} \in \H$ so is bounded.
    \end{proof}

    \begin{lemma}\label{lemma:h2-inner-product}
        Let $a,b \in \H_2$ with preimages $a', b' \in \Lmu$ such that
        $a = S_ka'$ and $b = S_kb'$, then
        \[
            \inner{a}{b}_\H = \int_\X a'(x)b'(y)k(x, y)\dd \mu(x) \dd\mu(y).
        \]
    \end{lemma}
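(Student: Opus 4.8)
The plan is to exploit the single structural fact recorded in~\cref{sec:structure} (and used already in the proof of~\cref{lemma:series}): the inclusion $\iota: \H \to \Lmu$ is the adjoint of $S_k$ by~\cite[Theorem 4.26]{steinwart2008support}. This is the key that converts an inner product on $\H$, for which we have no explicit formula, into a pairing on $\Lmu$, where everything is an honest integral against $\mu$.

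First I would write $\inner{a}{b}_\H = \inner{S_k a'}{b}_\H$ and move $S_k$ across the inner product using the adjoint relation $\inner{S_k f}{h}_\H = \inner{f}{\iota h}_\Lmu$, obtaining
\[
    \inner{a}{b}_\H = \inner{a'}{\iota b}_\Lmu.
\]
Since $b = S_k b'$ as an element of $\H$, its image $\iota b$ is the element of $\Lmu$ whose representative takes the pointwise values $\iota b(x) = (S_k b')(x) = \int_\X k(x, y) b'(y) \dd\mu(y)$; the inclusion changes only the ambient space, not the values. Substituting and expanding the $\Lmu$ pairing as an integral against $\mu$ then gives
\[
    \inner{a}{b}_\H = \int_\X a'(x) \left( \int_\X k(x, y) b'(y) \dd\mu(y) \right) \dd\mu(x).
\]

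The remaining step is to interchange the two integrals by Fubini's theorem to reach the claimed symmetric double integral. This is justified because $k$ is bounded by $M_k$ and $\mu$ is finite, so by Cauchy--Schwarz both $a'$ and $b'$ lie in $L^1(\mu)$, whence
\[
    \int_\X \int_\X \abs{a'(x)} \abs{b'(y)} \abs{k(x, y)} \dd\mu(x) \dd\mu(y) \le M_k \norm{a'}_{L^1(\mu)} \norm{b'}_{L^1(\mu)} < \infty.
\]

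I expect no serious obstacle: the only thing requiring care is the bookkeeping around the inclusion $\iota$, that is, keeping straight that $\inner{a}{b}_\H$ really reduces to an $\Lmu$ pairing against the pointwise values of $b$. An alternative route, avoiding explicit mention of the adjoint, would be to represent $a = S_k a' = \int_\X a'(y) k_y \dd\mu(y)$ as a Bochner integral in $\H$, pull the inner product inside, and apply the reproducing property $\inner{k_y}{b}_\H = b(y)$; this yields the same conclusion but incurs the extra cost of justifying the Bochner representation, so the adjoint approach is the cleaner one.
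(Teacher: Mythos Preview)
Your argument is correct, but it takes a different route from the paper. The paper actually uses exactly the approach you list at the end as the ``alternative'': it writes $a = \int_\X a'(x)\,k_x\,\dd\mu(x)$ and $b = \int_\X b'(y)\,k_y\,\dd\mu(y)$, invokes ``the inner product is a bounded linear functional, hence commutes with integration'' to pull $\inner{\cdot}{\cdot}_\H$ through both integrals, and then applies the reproducing identity $\inner{k_x}{k_y}_\H = k(x,y)$. Your adjoint argument via $S_k^\ast = \iota$ (\cite[Theorem 4.26]{steinwart2008support}) is arguably tidier, since it reduces everything to a single $\Lmu$ pairing plus an explicit, easily justified Fubini step; the paper's version is shorter on the page but leaves implicit precisely the Bochner-integral justification you identified as the extra cost. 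Either way the content is the same, and your Fubini bound using $\abs{k}\le M_k$ and $a',b'\in L^1(\mu)$ is correct.
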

    \begin{proof}
        The inner product on $\H$ is a bounded linear functional, hence commutes with
        integration. We can thus calculate
        \begin{align*}
            \inner{a}{b}_\H 
            &= \inner{\int_\X a'(x)k(x, \cdot)\dd\mu(x)}{\int_\X b'(y)k(y, \cdot)\dd\mu(y)}_\H \\
            &= \int_\X a'(x)b'(y)\inner{k_x}{k_y}_\H \dd\mu(x)\dd\mu(y) \\
            &= \int_\X a'(x)b'(y)k(x, y) \dd\mu(x)\dd\mu(y).
        \end{align*}
    \end{proof}

    \begin{lemma}\label{lemma:h2-self-adjoint}
        For any $f, h \in \H_2$,
    \[
        \inner{\O f}{h}_\H = \inner{f}{\O h}_\H.
    \]
    \end{lemma}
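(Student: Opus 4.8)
The plan is to reduce both $\H$-inner products to inner products in $\Lmu$, where the self-adjointness of $\O$ is already available, and then use that $\O$ commutes with the relevant integral operators. Since $f, h \in \H_2$, write $f = S_k f'$ and $h = S_k h'$ with $f', h' \in \Lmu$. By \cref{lemma:commute} we have $\O f = \O S_k f' = S_k \O f'$ and likewise $\O h = S_k \O h'$, so both $\O f$ and $\O h$ lie in $\H_2$ with preimages $\O f'$ and $\O h'$ respectively. This is what lets me apply the explicit inner-product formula of \cref{lemma:h2-inner-product}: it expresses $\inner{\O f}{h}_\H$ as $\int_\X (\O f')(x)\, h'(y)\, k(x,y)\dd\mu(x)\dd\mu(y)$ and $\inner{f}{\O h}_\H$ as $\int_\X f'(x)\,(\O h')(y)\, k(x,y)\dd\mu(x)\dd\mu(y)$.

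First I would integrate out one variable against the symmetric kernel $k$ to recognise each of these double integrals as an $\Lmu$-inner product built from the operator $T_k = \iota S_k$; concretely $\inner{\O f}{h}_\H = \inner{T_k \O f'}{h'}_\Lmu$ and $\inner{f}{\O h}_\H = \inner{T_k f'}{\O h'}_\Lmu$. The result then follows from the single chain
\[
    \inner{\O f}{h}_\H = \inner{T_k \O f'}{h'}_\Lmu = \inner{\O T_k f'}{h'}_\Lmu = \inner{T_k f'}{\O h'}_\Lmu = \inner{f}{\O h}_\H,
\]
where the middle step uses $T_k \O = \O T_k$ and the third step uses that $\O$ is self-adjoint on $\Lmu$ (\cref{lemma:basic-properties,lemma:l2-structure}). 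To justify $T_k \O = \O T_k$ I would note that $\O$ commutes with $S_k$ by \cref{lemma:commute} and with its adjoint $\iota$ by taking adjoints of that identity (legitimate since $\O$ is self-adjoint on $\Lmu$), hence commutes with $T_k = \iota S_k$. An equally valid but more hands-on alternative is to avoid the operator language entirely and equate the two double integrals directly: expand $\O f'(x) = \int_\G f'(gx)\dd\lambda(g)$, change variables $x \mapsto g^{-1}x$ using the invariance of $\mu$, apply unimodularity to send $g \mapsto g^{-1}$, use \cref{eq:kernel-switch} to move the group action from the first to the second argument of $k$, and finally reverse the substitutions so that the averaging lands on $h'$ instead of $f'$.

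The main obstacle is not conceptual but the careful justification of the interchanges of integration order underlying both the passage between $\H$- and $\Lmu$-inner products and the commuting identity $\O T_k = T_k \O$. These are exactly the Fubini applications already carried out in \cref{lemma:commute}, and they are licensed by the boundedness of $k$ (so every $f \in \H$ is bounded) together with the finiteness of $\mu$ and of the Haar measure $\lambda$; everything else is bookkeeping with the preimages. I would therefore cite \cref{lemma:commute} for the integrability hypotheses rather than re-deriving them, and restrict attention to $\H_2$ throughout, leaving the extension to all of $\H$ (via density of $\H_2$ and continuity of $\O$) to the subsequent step.
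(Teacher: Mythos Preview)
Your proposal is correct and in fact contains two proofs. The ``hands-on alternative'' you sketch at the end---expand $\O f'$, change variables using the $\G$-invariance of $\mu$, invoke unimodularity and \cref{eq:kernel-switch} to pass the group action across $k$, then undo the substitution---is precisely the paper's proof.

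Your primary argument takes a genuinely different route: you reduce to $\Lmu$ via $\inner{\O f}{h}_\H = \inner{T_k \O f'}{h'}_\Lmu$ and then use $T_k \O = \O T_k$ together with the already-known self-adjointness of $\O$ on $\Lmu$. This is conceptually cleaner, recycling \cref{lemma:l2-structure} rather than re-running the integral manipulations. One caution: your justification that $\O$ commutes with $\iota$ ``by taking adjoints'' is delicate at this point in the development, because the adjoint of the identity $\O S_k = S_k \O$ from \cref{lemma:commute} involves the $\H$-adjoint of $\O$, and self-adjointness of $\O$ on $\H$ is exactly what this lemma is a step towards. The fix is immediate, however: $\iota$ is inclusion and $\O$ is defined by the same pointwise averaging formula on both $\H$ and $\Lmu$, so $\iota \O = \O \iota$ holds by inspection without any appeal to adjoints. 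With that small patch the operator-theoretic argument goes through cleanly.
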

    \begin{proof}
        Let $f'$ and $h'$ be the pre-images of $f$ and $h$ respectively under $S_k$.
        Using~\cref{lemma:h2-inner-product}, Fubini's theorem~\cite[Theorem 1.27]{kallenberg2006foundations},
        the $\G$-invariance of $\mu$
        and~\cref{eq:kernel-switch} we can calculate
        \begin{align*}
            \inner{\O f}{h}_\H 
            &= \int_\X \int_\G f'(gx) h'(y) k(x, y)\dd\lambda(g)\dd\mu(x) \dd\mu(y) \\
            &= \int_\G \int_\X f'(x) h'(y) k(g^{-1}x, y) \dd\mu(x) \dd\mu(y)
                \dd\lambda(g) \quad\text{$\G$-invariance of $\mu$}\\
            &= \int_\G \int_\X f'(x) h'(y) k(x, g^{-1}y) \dd\mu(x) \dd\mu(y) \dd\lambda(g) 
            \quad \text{\cref{eq:kernel-switch}}\\
            &= \int_\G \int_\X f'(x) h'(gy) k(x, y) \dd\mu(x) \dd\mu(y) \dd\lambda(g) 
            \quad \text{$\G$-invariance of $\mu$}\\
            &=\inner{f}{\O h}_\H.
        \end{align*}
        The justification for the application of Fubini's theorem is the same as in
        the proof of~\cref{lemma:commute}.
    \end{proof}

    \begin{lemma}\label{lemma:h2-o-bounded}
        $\O :\H_2 \to \H_2$ is bounded and $\norm{\O}\le 1$.
    \end{lemma}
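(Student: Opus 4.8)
The plan is to exploit the standard fact that a self-adjoint idempotent operator on an inner product space is an orthogonal projection and hence has operator norm at most one; all the substantive work has already been done in the preceding lemmas, so this reduces to assembling those facts on the subspace $\H_2$.

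First I would establish that $\O$ is idempotent on $\H_2$. By~\cref{lemma:commute} we already know $\O$ maps $\H_2$ into itself, so $\O$ is a well-defined operator there. For any $f \in \H_2$ the orbit-averaged function $\O f$ is $\G$-invariant by the invariance of $\lambda$, and hence~\cref{lemma:basic-properties} gives $\O(\O f) = \O f$. Thus $\O^2 = \O$ on $\H_2$.

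Next I would combine idempotence with the self-adjointness provided by~\cref{lemma:h2-self-adjoint}. For any $f \in \H_2$,
\[
    \norm{\O f}_\H^2 = \inner{\O f}{\O f}_\H = \inner{f}{\O^2 f}_\H = \inner{f}{\O f}_\H \le \norm{f}_\H \norm{\O f}_\H,
\]
where the second equality uses self-adjointness, the third uses $\O^2 = \O$, and the final inequality is Cauchy--Schwarz. Dividing through by $\norm{\O f}_\H$ (the bound being trivial when this vanishes) yields $\norm{\O f}_\H \le \norm{f}_\H$, that is $\norm{\O} \le 1$.

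There is essentially no remaining obstacle: the two genuinely nontrivial ingredients---that $\O$ preserves $\H_2$ and that it is self-adjoint with respect to the $\H$ inner product---are exactly~\cref{lemma:commute,lemma:h2-self-adjoint}. The only point requiring a moment's care is that idempotence must be checked directly on the restricted domain $\H_2$ rather than inferred from the $\Lmu$ statement, but this is immediate from $\O f$ being $\G$-invariant together with~\cref{lemma:basic-properties}.
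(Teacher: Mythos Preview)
Your proposal is correct and essentially identical to the paper's proof: both combine the self-adjointness from \cref{lemma:h2-self-adjoint} with the idempotence of $\O$ from \cref{lemma:basic-properties} and Cauchy--Schwarz to obtain $\norm{\O f}_\H^2 = \inner{f}{\O f}_\H \le \norm{f}_\H\norm{\O f}_\H$. The paper's version is slightly terser, omitting the explicit justification that $\O$ preserves $\H_2$ and that idempotence holds there, but the underlying argument is the same.
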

    \begin{proof}
        Let $f \in \H_2$, then using~\cref{lemma:h2-self-adjoint,lemma:basic-properties}
        along with Cauchy-Schwarz
        \[
        \norm{\O f}_\H^2 = \inner{\O f}{\O f}_\H = \inner{f}{\O f}_\H \le \norm{f}_\H\norm{\O f}_\H.
        \]
    \end{proof}

    \begin{lemma}\label{lemma:o-well-defined}
        $f \in \H \implies \O f \in \H$ so $\O : \H \to \H$ is well defined.
    \end{lemma}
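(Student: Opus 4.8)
The plan is to leverage the fact that $\O$ is already well-behaved on the dense subspace $\H_2 = S_k(\Lmu)$, extend it by continuity to all of $\H$, and then verify that this abstract extension agrees pointwise with the orbit average. First I would invoke the bounded linear transformation theorem: by~\cref{lemma:dense-image} the subspace $\H_2$ is dense in $\H$, and by~\cref{lemma:h2-o-bounded} the restriction $\O|_{\H_2}$ is a bounded operator with $\norm{\O}\le 1$ taking values in $\H_2 \subset \H$. Since $\H$ is complete, there is a unique bounded operator $\tilde{\O}:\H\to\H$ that agrees with $\O$ on $\H_2$. This already produces a map from $\H$ into $\H$; what remains, and what the statement actually asserts, is to identify $\tilde{\O}f$ with the genuine orbit average $\O f(x)=\int_\G f(gx)\dd\lambda(g)$ for every $f\in\H$.

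For the identification step I would fix $f\in\H$ and choose a sequence $f_n\in\H_2$ with $f_n\to f$ in $\H$. Because the evaluation functionals are bounded, $\H$-convergence forces pointwise convergence, and crucially the bound is uniform over the whole orbit: using the reproducing property and $\norm{k_{gx}}_\H=\sqrt{k(gx,gx)}\le\sqrt{M_k}$ for all $g\in\G$, one gets $\sup_{g\in\G}\abs{f_n(gx)-f(gx)}\le\sqrt{M_k}\,\norm{f_n-f}_\H\to 0$. Combining this uniform estimate with the finiteness $\lambda(\G)=1$ allows me to pass the limit through the Haar integral, yielding $(\O f_n)(x)\to\int_\G f(gx)\dd\lambda(g)=(\O f)(x)$ pointwise. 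On the other hand, $\O f_n=\tilde{\O}f_n\to\tilde{\O}f$ in $\H$ and hence also pointwise, so $\tilde{\O}f=\O f$ as functions. Therefore $\O f=\tilde{\O}f\in\H$, which is exactly the claim.

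The only genuinely delicate point is the interchange of limit and integral in the identification step: one must ensure that the approximating sequence converges uniformly across the entire orbit $\{gx:g\in\G\}$, not merely at the single point $x$. This is precisely where the boundedness hypothesis $\sup_x k(x,x)=M_k<\infty$ earns its keep, supplying the uniform control on $\norm{k_{gx}}_\H$, and where compactness of $\G$ (through $\lambda(\G)<\infty$) is needed so that uniform convergence of the integrands transfers to convergence of the integrals. Everything else reduces to the standard density-plus-continuity extension together with the reproducing property.
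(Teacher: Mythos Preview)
Your proof is correct and follows essentially the same route as the paper: approximate $f\in\H$ by a sequence $f_n\in\H_2$ using~\cref{lemma:dense-image}, invoke boundedness of $\O$ on $\H_2$ (\cref{lemma:h2-o-bounded}) together with completeness of $\H$ to produce a limit in $\H$, and then identify that limit with the concrete orbit average $\O f$. The paper compresses the identification into the single phrase ``$\O$ bounded so is also continuous and we get $\bar f=\lim\O f_n=\O\lim f_n=\O f$'', whereas you make the passage to the limit explicit via the uniform estimate $\sup_{g\in\G}\abs{f_n(gx)-f(gx)}\le\sqrt{M_k}\,\norm{f_n-f}_\H$ and the finiteness of $\lambda(\G)$; this is the same underlying idea, only spelled out in more detail.
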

    \begin{proof}
        By~\cref{lemma:dense-image}, for any $f \in \H$
        there is a sequence $\{f_n\} \subset \H_2$ converging to $f$ in $\norm{\cdot}_\H$.
        \Cref{lemma:commute} shows that $\O : \H_2 \to \H_2$ is well defined,
        so the sequence $\{\O f_n\} \subset \H_2$.
        By~\cref{lemma:h2-o-bounded} we have
        $
            \norm{\O f_n - \O f_m}_\H \le \norm{f_n - f_m}_\H
            $
        and so $\{\O f_n\}$ is Cauchy.
        By completeness of $\H$, $\bar{f} \coloneqq \lim_{n \to \infty} \O f_n \in \H$.
        Moreover, $\O$ bounded so is also continuous and we get
        $
            \bar{f} = \lim_{n \to \infty} \O f_n = \O \lim_{n \to \infty} f_n  = \O f
            $.
    \end{proof}

    \begin{lemma}\label{lemma:self-adjoint}
        $\O$ is self-adjoint with respect to the inner product on $\H$.
    \end{lemma}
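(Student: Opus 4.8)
The plan is to bootstrap the self-adjointness already established on the dense subspace $\H_2$ up to all of $\H$ by a routine density-and-continuity argument. The two ingredients I need are already in hand: \Cref{lemma:h2-self-adjoint} gives $\inner{\O f}{h}_\H = \inner{f}{\O h}_\H$ for all $f, h \in \H_2$, while \Cref{lemma:o-well-defined,lemma:h2-o-bounded} together show that $\O$ is well-defined on all of $\H$ and is bounded (hence continuous) there, the bound $\norm{\O} \le 1$ on $\H_2$ propagating to the extension constructed in \Cref{lemma:o-well-defined}.

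Concretely, I would fix arbitrary $f, h \in \H$ and, using the density of $\H_2$ in $\H$ from \Cref{lemma:dense-image}, choose sequences $\{f_n\}, \{h_n\} \subset \H_2$ with $f_n \to f$ and $h_n \to h$ in $\norm{\cdot}_\H$. On $\H_2$ the self-adjointness relation holds termwise, so $\inner{\O f_n}{h_n}_\H = \inner{f_n}{\O h_n}_\H$ for every $n$. Since $\O$ is continuous on $\H$ we have $\O f_n \to \O f$ and $\O h_n \to \O h$, and since the inner product is jointly continuous in its two arguments, letting $n \to \infty$ on both sides gives $\inner{\O f}{h}_\H = \inner{f}{\O h}_\H$, which is exactly the claimed self-adjointness.

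There is essentially no hard step in this argument; the only point requiring care is the continuity of $\O$ on the whole of $\H$ rather than merely on $\H_2$, and this is precisely what the preceding lemmas secure. The remainder is the standard fact that a bounded symmetric operator defined on a dense subspace extends to a self-adjoint operator on the closure. I would therefore expect the write-up to be short, with the substantive work already discharged in \Cref{lemma:o-well-defined,lemma:h2-o-bounded,lemma:h2-self-adjoint}.
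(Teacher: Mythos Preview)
Your proposal is correct and follows essentially the same approach as the paper: extend the self-adjointness from the dense subspace $\H_2$ to all of $\H$ using density (\cref{lemma:dense-image}), the self-adjointness on $\H_2$ (\cref{lemma:h2-self-adjoint}), and the fact that $\O h_n \to \O h$ whenever $h_n \to h$ with $h_n \in \H_2$, which is exactly what the proof of \cref{lemma:o-well-defined} establishes. The only cosmetic difference is that the paper passes to the limit in one slot at a time (first $h \in \H$, $f \in \H_2$, then both in $\H$), whereas you take both limits simultaneously via the joint continuity of the inner product; and the paper defers the explicit statement that $\norm{\O} \le 1$ on $\H$ to the corollary immediately following this lemma, while you extract it directly from \cref{lemma:h2-o-bounded,lemma:o-well-defined}.
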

    \begin{proof}
        We will make use of the continuity of the inner product on $\H$.
        First let $h \in \H$, $f\in \H_2$. We saw from the proof
        of~\cref{lemma:o-well-defined} that $\exists$ sequence $\{h_n\}\subset \H_2$
        with limit $h$ and $\{\O h_n\} \subset \H_2$ with limit $\O h$.
        Then $\inner{\O h_n}{f}_\H \to \inner{\O h}{f}_\H$ and simultaneously,
        applying~\cref{lemma:h2-self-adjoint},
        $
            \inner{\O h_n}{f}_\H = \inner{h_n}{\O f}_\H \to \inner{h}{\O f}_\H
            $
        so the two limits must be equal. Then assuming instead that $f \in \H$
        one can do the same calculation again arrive at the conclusion.
    \end{proof}

    \begin{corollary}\label{cor:o-bounded}
        $\O: \H \to \H$ is bounded with $\norm{\O} \le 1$.
        Indeed, if $\H$ contains any $\G$-invariant functions 
        then $\norm{\O} = 1$ and if not then $\norm{\O}=0$.
    \end{corollary}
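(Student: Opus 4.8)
The plan is to exploit that $\O$ is an orthogonal projection on $\H$, which follows by combining~\cref{lemma:basic-properties} (that $\O$ is idempotent) with~\cref{lemma:self-adjoint} (that $\O$ is self-adjoint). Once this is in hand, the bound $\norm{\O} \le 1$ is just the standard fact that orthogonal projections are contractions, and the dichotomy between $\norm{\O} = 1$ and $\norm{\O} = 0$ is decided entirely by whether the range of $\O$, which~\cref{thm:structure} identifies as $\Hs$, is trivial.

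First I would establish $\norm{\O} \le 1$. For any $f \in \H$, \cref{lemma:basic-properties} gives $\O^2 = \O$, so $\O(\id - \O)f = 0$; that is, $\O f$ lies in the eigenspace for eigenvalue $1$ while $(\id - \O)f$ lies in the eigenspace for eigenvalue $0$. These two vectors are orthogonal by the same computation as in the proof of~\cref{thm:structure}: using self-adjointness, $\inner{\O f}{(\id - \O)f}_\H = \inner{f}{\O(\id - \O)f}_\H = 0$. The Pythagorean identity then yields
\[
    \norm{f}_\H^2 = \norm{\O f}_\H^2 + \norm{(\id - \O)f}_\H^2 \ge \norm{\O f}_\H^2,
\]
so $\norm{\O f}_\H \le \norm{f}_\H$ for every $f$, whence $\norm{\O} \le 1$.

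Next I would split into the two cases. If $\H$ contains a nonzero $\G$-invariant function $h$, then~\cref{lemma:basic-properties} gives $\O h = h$, so $\norm{\O h}_\H = \norm{h}_\H$ and therefore $\norm{\O} \ge 1$; combined with the contraction bound this forces $\norm{\O} = 1$. If instead $\H$ contains no nonzero $\G$-invariant function, then $\Hs = \{0\}$, and since the proof of~\cref{thm:structure} identifies $\Hs$ with the range $\O\H$ of $\O$, we obtain $\O f = 0$ for every $f \in \H$; hence $\O = 0$ and $\norm{\O} = 0$.

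There is no genuine obstacle here once the preceding lemmas are available, as every ingredient is a direct consequence of $\O$ being a self-adjoint idempotent with range $\Hs$. The only point requiring care is interpreting ``contains any $\G$-invariant functions'' as referring to \emph{nonzero} such functions, since the zero function is trivially invariant and always lies in $\H$; this reading is what makes the two cases exhaustive and mutually exclusive.
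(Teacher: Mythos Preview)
Your argument is correct and close in spirit to the paper's. For the bound $\norm{\O}\le 1$, the paper simply repeats the Cauchy--Schwarz computation from \cref{lemma:h2-o-bounded}, namely $\norm{\O f}_\H^2 = \inner{\O f}{\O f}_\H = \inner{f}{\O f}_\H \le \norm{f}_\H\norm{\O f}_\H$, now justified on all of $\H$ by \cref{lemma:self-adjoint}; you instead use the Pythagorean identity. Both are immediate from $\O$ being a self-adjoint idempotent, and your treatment of the dichotomy is the same as the paper's (which phrases it as ``\cref{lemma:basic-properties} plus the variational representation of the operator norm'').

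One point of logical hygiene: in the second case you appeal to \cref{thm:structure} to identify $\Hs$ with the range $\O\H$, but \cref{cor:o-bounded} is a prerequisite for \cref{thm:structure} (the proof there uses continuity of $\O$ to conclude $\Hs$ is complete), so this risks circularity. You do not need \cref{thm:structure} at all: by \cref{lemma:o-well-defined} we have $\O f\in\H$, and by \cref{lemma:basic-properties} (or the remark in \cref{sec:function-space}) $\O f$ is $\G$-invariant; hence if $\H$ contains no nonzero $\G$-invariant function then $\O f=0$ for every $f$. This is the argument the paper has in mind and keeps the dependency order clean.
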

    \begin{proof}
        Using~\cref{lemma:self-adjoint} we can repeat the calculation in~\cref{lemma:h2-self-adjoint}.
        The second claim follows from~\cref{lemma:basic-properties} and the variational
        representation of the operator norm.
    \end{proof}

\end{supplementary}

\printbibliography

\if\submission1
    \section*{Checklist}


    \begin{enumerate}

    \item For all authors...
    \begin{enumerate}
    \item Do the main claims made in the abstract and introduction accurately reflect the paper's contributions and scope?
        \answerYes{}
    \item Did you describe the limitations of your work?
        \answerYes{See~\cref{sec:technical} which describes our assumptions.}
    \item Did you discuss any potential negative societal impacts of your work?
        \answerNA{}
    \item Have you read the ethics review guidelines and ensured that your paper conforms to them?
        \answerYes{}
    \end{enumerate}

    \item If you are including theoretical results...
    \begin{enumerate}
    \item Did you state the full set of assumptions of all theoretical results?
        \answerYes{See~\cref{sec:technical} for general technical conditions not given in
            statements of results.}
        \item Did you include complete proofs of all theoretical results?
        \answerYes{Proofs are given in the supplementary material}
    \end{enumerate}

    \item If you ran experiments...
    \begin{enumerate}
    \item Did you include the code, data, and instructions needed to reproduce the main experimental results (either in the supplemental material or as a URL)?
        \answerNA{}
    \item Did you specify all the training details (e.g., data splits, hyperparameters, how they were chosen)?
        \answerNA{}
        \item Did you report error bars (e.g., with respect to the random seed after running experiments multiple times)?
        \answerNA{}
        \item Did you include the total amount of compute and the type of resources used (e.g., type of GPUs, internal cluster, or cloud provider)?
        \answerNA{}
    \end{enumerate}

    \item If you are using existing assets (e.g., code, data, models) or curating/releasing new assets...
    \begin{enumerate}
    \item If your work uses existing assets, did you cite the creators?
        \answerNA{}
    \item Did you mention the license of the assets?
        \answerNA{}
    \item Did you include any new assets either in the supplemental material or as a URL?
        \answerNA{}
    \item Did you discuss whether and how consent was obtained from people whose data you're using/curating?
        \answerNA{}
    \item Did you discuss whether the data you are using/curating contains personally identifiable information or offensive content?
        \answerNA{}
    \end{enumerate}

    \item If you used crowdsourcing or conducted research with human subjects...
    \begin{enumerate}
    \item Did you include the full text of instructions given to participants and screenshots, if applicable?
        \answerNA{}
    \item Did you describe any potential participant risks, with links to Institutional Review Board (IRB) approvals, if applicable?
        \answerNA{}
    \item Did you include the estimated hourly wage paid to participants and the total amount spent on participant compensation?
        \answerNA{}
    \end{enumerate}

    \end{enumerate}

\fi

\end{document}